\theoremstyle{plain}
\newtheorem{theorem}{Theorem}
\newtheorem{lemma}{Lemma}
\theoremstyle{definition}
\newtheorem{definition}{Definition}
\theoremstyle{remark}
\DeclareMathOperator{\Tr}{Tr}
\icmltitlerunning{Aligning Multimodal Representations through an Information Bottleneck}
\begin{document}

\twocolumn[
\icmltitle{Aligning Multimodal Representations through an Information Bottleneck}

% It is OKAY to include author information, even for blind
% submissions: the style file will automatically remove it for you
% unless you've provided the [accepted] option to the icml2025
% package.

% List of affiliations: The first argument should be a (short)
% identifier you will use later to specify author affiliations
% Academic affiliations should list Department, University, City, Region, Country
% Industry affiliations should list Company, City, Region, Country

% You can specify symbols, otherwise they are numbered in order.
% Ideally, you should not use this facility. Affiliations will be numbered
% in order of appearance and this is the preferred way.
\icmlsetsymbol{equal}{*}

\begin{icmlauthorlist}
\icmlauthor{Antonio Almudévar}{unizar}
\icmlauthor{José Miguel Hernández-Lobato}{cam}
\icmlauthor{Sameer Khurana}{merl}
\icmlauthor{Ricard Marxer}{toulon}
\icmlauthor{Alfonso Ortega}{unizar}
\end{icmlauthorlist}

\icmlaffiliation{unizar}{ViVoLab, Aragón Institute for Engineering Research (I3A), University of Zaragoza, Zaragoza, Spain}
\icmlaffiliation{cam}{University of Cambridge, Cambridge, UK}
\icmlaffiliation{merl}{Mitsubishi Electric Research Laboratories (MERL), Cambridge, USA}
\icmlaffiliation{toulon}{Université de Toulon, Aix Marseille Univ, CNRS, LIS, Toulon, France}

\icmlcorrespondingauthor{Antonio Almudévar}{almudevar@unizar.es}
% You may provide any keywords that you
% find helpful for describing your paper; these are used to populate
% the "keywords" metadata in the PDF but will not be shown in the document
\icmlkeywords{multimodal representation learning, representational alignment, modality gap, information bottleneck}

\vskip 0.3in
]

% this must go after the closing bracket ] following \twocolumn[ ...

% This command actually creates the footnote in the first column
% listing the affiliations and the copyright notice.
% The command takes one argument, which is text to display at the start of the footnote.
% The \icmlEqualContribution command is standard text for equal contribution.
% Remove it (just {}) if you do not need this facility.

\printAffiliationsAndNotice{}  % leave blank if no need to mention equal contribution
%\printAffiliationsAndNotice{\icmlEqualContribution} % otherwise use the standard text.

\begin{abstract}
Contrastive losses have been extensively used as a tool for multimodal representation learning. 
However, it has been empirically observed that their use is not effective to learn an aligned representation space.
In this paper, we argue that this phenomenon is caused by the presence of modality-specific information in the representation space. 
Although some of the most widely used contrastive losses maximize the mutual information between representations of both modalities, they are not designed to remove the modality-specific information.
We give a theoretical description of this problem through the lens of the Information Bottleneck Principle. 
We also empirically analyze how different hyperparameters affect the emergence of this phenomenon in a controlled experimental setup.
Finally, we propose a regularization term in the loss function that is derived by means of a variational approximation and aims to increase the representational alignment.
We analyze in a set of controlled experiments and real-world applications the advantages of including this regularization term.
\end{abstract}

\vspace{-5pt}
\section{Introduction}
Multimodal Learning is an area of AI that is focused on processing and integrating information from multiple modalities (e.g., text, image or audio). It is becoming a pivotal topic in the community because of multiple reasons, including, but not limited to,
\begin{enumerate*}[label=(\roman*)]
    \item it permits to mimic human cognition processes \cite{fei2022towards, lee2023multimodality};
    \item it allows to use a greater amount of training data from different modalities, which tends to improve the performance of the models \cite{kaplan2020scaling, cuervo2024scaling}; and
    \item it is essential in real-world applications such as autonomous vehicles \cite{xiao2020multimodal}, healthcare \cite{kline2022multimodal} or human-computer interaction \cite{sinha2010human}.
\end{enumerate*}

Similarly to humans, most AI systems work through obtaining intermediate representations, which are compressed versions of the raw data that preserve useful information to solve different downstream tasks \cite{bengio2013representation, cadieu2014deep}. One of the most widely used ways of training multimodal systems is Contrastive Representation Learning \cite{karpathy2015deep, oord2018representation, tian2020contrastive}. In this paradigm, representations corresponding to similar input data are brought closer than dissimilar ones. For example, the caption ``a photo of a dog" should become closer to an image of \textit{a dog} than to that of \textit{a cat}.
The most widely used of the contrastive losses is the $\mathrm{InfoNCE}$ \cite{oord2018representation}. Minimizing this loss is equivalent to maximizing a lower bound of the mutual information of the representations from both modalities. In other words, when minimizing this loss, representations from each modality should maximize the information that they contain about what is common between them. 

However, the above does not imply that representations from both modalities contain the same information. Representations could contain all the information about what is common to both modalities, but still preserve much of the information that is specific to their own modality (a.k.a. nuisances from now on). 
We argue that this can translate into a substantial representational misalignment \cite{klabunde2023similarity}, especially when the inputs contain a high level of nuisances. In other words, representations from two modalities of a positive pair could be not so similar to each other due to the fact that they are encoding different information.
Figure~\ref{fig:abstract} illustrates a trivial example in which two similar, yet different, images have exactly the same caption. Thus, the text representations are exactly the same, while the image representations are different from each other, since they can be encoding information about aspects like the color of the dog, the number of clouds in the sky or the number of blades of grass.
This misalignment phenomenon has been already observed and denominated \textit{modality gap} \cite{liang2022mind}. However, to the best of our knowledge, the present is the first work in which this phenomenon is explained from an information theory perspective.
\begin{figure}[ht]
    \centering
    \includegraphics[width=\linewidth]{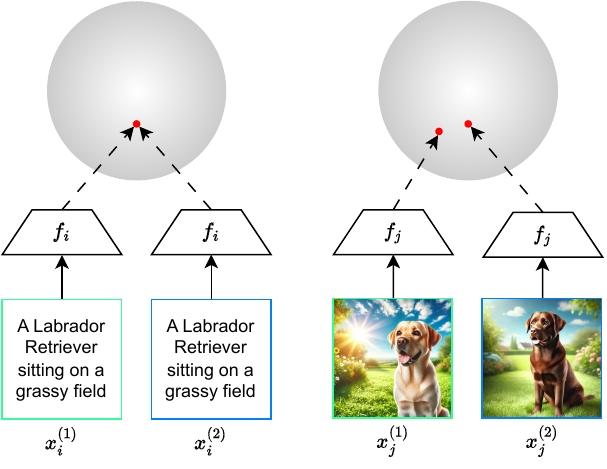}
    \vspace{-20pt}
    \caption{Different modalities usually contain different information. A trivial example of this is the case in which two different images have exactly the same caption. As a consequence of this, representations from different modalities tend to contain different information (thus leading to misalignment) if the opposite is not explicitly imposed. 
    %The images shown in this figure have been generated by AI.
    }
    \vspace{-15pt}
    \label{fig:abstract}
\end{figure}
\newpage
It is precisely this explanation which allows us to propose a solution to this phenomenon. Concretely, we propose to apply an Information Bottleneck (IB) \cite{tishby2000information} in the representation space. With this, apart from maximizing the mutual information between the representations of both modalities through the contrastive loss, we reduce the nuisances that can be found in the representations. This IB is applied by means of a regularization term in the loss function that is derived using a variational approximation. Thus, it is efficient, straightforward to implement and modality-agnostic, which is advantageous over alternative approaches \cite{li2021align}.

\vspace{-5pt}
\section{Preliminaries} \label{sec:preliminaries}

\paragraph{Contrastive Representation Learning (CRL)}
This encompasses a set of techniques that learns a representation space in which representations of similar inputs are closer than those of dissimilar ones. It has emerged as one of the most competitive methods for learning representations without labels in a self-supervised way \cite{oord2018representation, hjelm2018learning, wu2018unsupervised, logeswaran2018efficient, bachman2019learning, tian2020contrastive, chen2020simple, henaff2020data}.
The most widely used among the contrastive losses is the $\mathrm{InfoNCE}$ \cite{oord2018representation} and it has been shown that minimizing this is equivalent to maximizing a lower bound of the mutual information (MI) between a pair of representations \cite{bachman2019learning, tian2020contrastive}.
%Some works claimed that the motivation justifying the success of InfoNCE in a wide variety of tasks were precisely the maximization of this MI. However, theoretical arguments and empirical evidence have been provided to demonstrate that the good performance of this loss cannot be solely attributed to the maximization of this MI \cite{tschannen2019mutual}. 
%\citet{wang2020understanding} shed light on this demonstrating that InfoNCE promotes \textit{alignment} (i.e. positive pairs have similar representations) and \textit{uniformity} (i.e., representations preserve maximal information). We note that the term \textit{alignment} used in \cite{wang2020understanding} refers to a simpler concept than the one explained later in this text. 

\vspace{-5pt}
\paragraph{Multimodal Contrastive Representation Learning}
One of the tasks in which contrastive losses have gained popularity is Multimodal Representation Learning, which consists in designing systems that map inputs from different modalities (e.g. image and text) into a joint representation space.
Some foundation works used rank-based losses \cite{yager1988ordered, usunier2009ranking, schroff2015facenet} to learn multimodal representation spaces \cite{weston2010large, frome2013devise, karpathy2015deep} while more modern approaches have used the $\mathrm{InfoNCE}$ loss \cite{tian2020contrastive, radford2021learning, jia2021scaling, xu2021videoclip, girdhar2023imagebind} for this purpose. 
However, it has been observed that, when trained in a contrastive way, representations from different modalities tend to be located in different regions of the space, a phenomenon called \textit{modality gap} \cite{liang2022mind, udandarao2022understanding, ramasinghe2024accept, fahim2024its, schrodi2024two}.
This phenomenon can be an issue in some applications such as Image Captioning or Visual Question Answering, so sophisticated training methods have been proposed to palliate it \cite{chen2020uniter, li2021align, li2022blip, li2023blip}.
%º  As \citet{schrodi2024two} explain, this effect arises due to an information imbalance between modalities, i.e., inputs in one modality can contain some information that inputs in the other one do not, and vice versa. In this work we give some intuitions on why InfoNCE is not good at dealing with this information imbalance and propose an information bottleneck to solve this issue.

\vspace{-5pt}
\paragraph{Measuring Representational Alignment}
The representational alignment (or similarity) is typically measured through kernel alignment metrics \cite{cristianini2001kernel, cortes2012algorithms}. 
Examples of these include Centered Kernel Alignment (CKA) \cite{kornblith2019similarity}, SVCCA \cite{raghu2017svcca} and nearest-neighbor metrics \cite{klabunde2023similarity}. However, in this work we restrict our attention to the former, since it is the most widely used for this purpose.
Let $Z^{(\alpha)} \in \mathbb{R}^{n\times d_\alpha}$ and $Z^{(\beta)} \in \mathbb{R}^{n\times d_\beta}$ be two sets of representations, $K=k(z^{(\alpha)}_i, z^{(\alpha)}_j)$ and $L=l(z^{(\beta)}_i, z^{(\beta)}_j)$, where $k:\mathbb{R}^{d_\alpha} \times \mathbb{R}^{d_\alpha} \rightarrow \mathbb{R}$ and $l:\mathbb{R}^{d_\beta} \times \mathbb{R}^{d_\beta} \rightarrow \mathbb{R}$ are kernel functions. Then, the Hilbert-Schmidt Independence Criterion is defined as:
\vspace{-5pt}
\begin{equation}
    HSIC(K,L)=\frac{1}{(n-1)^2} \Tr{\left(KHLH\right)}
\end{equation}
where $H=I_n-\frac{1}{n}\bm{1}\bm{1}^T$ is the centering matrix. Then, the CKA is defined as follows:
\begin{equation} \label{eq:cka}
    CKA(K,L)=\frac{HSIC(K,L)}{\sqrt{HSIC(K,K)HSIC(L,L)}}
\end{equation}
This metric ranges between zero and one and we say that a pair of representations is perfectly aligned when $CKA=1$.

\vspace{-5pt}
\paragraph{Information Bottleneck}
It is a framework that aims to find a representation that contains all the information in an input that is necessary to solve a given task, while discarding irrelevant information \cite{tishby2000information, tishby2015deep}. Given an input $X$, a task $Y$ and a representation $Z$, the goal can be formulated as:
\begin{equation} \label{eq:ib}
    \max_{Z} I(Z;Y) - \beta I(Z;X)
\end{equation}
where $\beta$ is a Lagrange multiplier that controls the trade-off between compression and preserving the information that is relevant for the task $Y$. 
\iffalse
Only the cases where $\beta $. On the one hand, if $\beta < 0$, then any uncompressed solution such as $Z=X$ maximizes IB. On the other hand, if $\beta \geq 1$, then, by the DPI, the IB is non-positive and trivial solutions such as $Z=\textit{constant}$ maximize the IB.
We note that, since $Z$ is a representation of $X$, then $I(Z;X|Y)=I(Z;X)-I(Z;Y)$, which implies that the Information Bottleneck can be expressed as:
\begin{equation} \label{eq:ib_cond}
    \max_{Z} I(Z;Y) - \beta_C I(Z;X|Y)
\end{equation}
where $\beta_C = \frac{\beta}{1-\beta} \in [0,\infty)$.
\fi

\section{On the Importance of Minimal Sufficient Representations} \label{sec:importance}
To formulate our hypothesis, we assume that the data of a modality are composed of an essence, which is all that information that can be found in both modalities; and nuisances, which  refers to all that information that can be found only in one modality. 
In addition, our goal is to obtain representations of the inputs of each modality.
Next, we explain more in detail these concepts.
All the proofs of the Lemmas and Theorems can be found in Appendix \ref{app:proofs}

\vspace{-5pt}
\subsection{Input Data: Essence and Nuisances} 
\begin{definition} \label{def:essence}
    Let $(X_\alpha, X_\beta)$ be a pair of positive inputs from modalities $\alpha$ and $\beta$, respectively. We call \textit{essence} to a variable $Y$ that satisfies the following Markov Chains:
    \begin{gather}
        X_\beta \leftrightarrow Y \leftrightarrow X_\alpha \label{eq:essence_1}\\
        Y \leftrightarrow X_\beta \leftrightarrow X_\alpha \label{eq:essence_2}
    \end{gather}
\end{definition}
\vspace{-5pt}
i.e., it refers to the common part of a positive pair of data. 
%We note that we use $Y$ for the essence following the notation of the last paragraph in section~\ref{sec:preliminaries}, since our task here is to capture the essence. 
Although there exists more than one essence (infinite, in fact), all of them are equivalent under a one-to-one transformation, which is formalized next.
\begin{lemma} \label{lemma:essences}
    Let $Y$ and $Y'$ be essences of the same pair of modalities. Then, there exist a one-to-one transformation $\Psi$ such that $Y = \Psi(Y')$.
\end{lemma}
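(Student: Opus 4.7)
The plan is to show that any two essences $Y$ and $Y'$ encode exactly the same information about $(X_\alpha, X_\beta)$, so that they must be related by a deterministic relabeling. The argument proceeds in three stages.

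First, I would translate the two Markov chains in Definition~\ref{def:essence} into conditional-independence identities: $I(X_\alpha; X_\beta \mid Y) = 0$ from (1) and $I(Y; X_\alpha \mid X_\beta) = 0$ from (2). Expanding $I(X_\alpha; Y, X_\beta)$ two ways and substituting these gives $I(Y; X_\alpha) = I(X_\alpha; X_\beta)$, and a parallel chain-rule manipulation (using that the two chains together also imply $I(Y; X_\beta \mid X_\alpha) = 0$) gives $I(Y; X_\beta) = I(X_\alpha; X_\beta)$. Exactly the same identities hold for $Y'$, so both essences realize the same three mutual informations.

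Second, I would argue that any essence must in fact be a deterministic function of each modality. From the two Markov chains one obtains $p(X_\alpha \mid Y) = p(X_\alpha \mid X_\beta)$ on the joint support, so the map $x_\beta \mapsto p(X_\alpha \mid X_\beta = x_\beta)$ factors through $Y$. The symmetric reading, combined with the requirement that $Y$ be expressible from $X_\alpha$ on overlapping supports, rules out any additional randomness in $Y$ beyond what $p(X_\alpha \mid X_\beta)$ already determines. Hence $Y = \phi(X_\beta) = \phi_\alpha(X_\alpha)$ for deterministic $\phi, \phi_\alpha$, and analogously $Y' = \phi'(X_\beta) = \phi'_\alpha(X_\alpha)$.

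Third, since both $\phi$ and $\phi'$ must partition $X_\beta$ into precisely the equivalence classes defined by equality of the conditional $p(X_\alpha \mid X_\beta = \cdot)$, the two partitions coincide. The bijection $\Psi$ is then obtained by sending each label produced by $\phi'$ to the corresponding label produced by $\phi$, which gives $Y = \Psi(Y')$ as required.

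The hardest step is the second: passing from conditional-independence statements to deterministic functional relationships. In effect it amounts to recognizing an essence as a minimal sufficient statistic of one modality for the other, which may require reading a minimality/data-function condition into Definition~\ref{def:essence}; once that reading is in hand, the classical uniqueness of minimal sufficient statistics up to a bijection immediately closes the argument.
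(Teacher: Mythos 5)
Your argument lands in the same place as the paper's own proof, which is exactly the one-liner you anticipate at the end: the paper asserts that Definition~\ref{def:essence} makes $Y$ and $Y'$ minimal sufficient statistics of $X_\alpha$ for $X_\beta$ and invokes the classical fact that such statistics are deterministic functions of one another, hence related by a bijection. The difficulty you flag in your second step is real but is shared by the paper—Definition~\ref{def:essence} as written does not literally impose minimality or determinism (e.g.\ $Y' = X_\beta$ itself satisfies both Markov chains yet need not be a bijective image of the minimal statistic), so both your proof and the paper's rely on reading that extra condition into the definition.
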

\vspace{-5pt}

Equivalently, the partitions of $X_\alpha$ and $X_\beta$ created by $Y$ are unique.
We note that the essence $Y$ is a variable that we define to help us with the formulation, but our goal is not to discover $Y$ itself, but the partition of the input set that it creates.
For example, if we have two images with the same caption, we will consider that both images are equivalent in the sense that they belong to a common subset of the images set, but we are not interested in defining the subset.
Thus, from now on we will refer to the essence as if it were a unique variable.
\begin{definition} 
    Let $X_\alpha$ be an input of modality $\alpha$, $Y$ the essence of $X_\alpha$ with respect to another modality. We call \textit{nuisance} of modality $\alpha$ to a variable $N_\alpha$ that satisfies:
    \begin{gather}
        I(Y; N_\alpha) = 0 \label{eq:nuisances_1}\\
        I(X_\alpha; N_\alpha) = H(X_\alpha) - H(Y) = H(N_\alpha) \label{eq:nuisances_2}
    \end{gather}
\end{definition}
\vspace{-5pt}
i.e., it refers to all information from $X_\alpha$ that cannot be found in the other modality.
\iffalse
Similarly to the essence, there exists more than one nuisance. However, they are all equivalent under an one-to-one transformation. Formally:
\begin{lemma}  \label{lemma:nuisances}
    Let $N_\alpha$ and $N'_\alpha$ be nuisances of $X_\alpha$. Then, there exists an one-to-one transformation $\Psi$ such that $N_\alpha = \Psi(N'_\alpha)$.
\end{lemma}
Equivalently, the partition of $X_\alpha$ created by $N_\alpha$ is unique. For the same reason as for the essence, we will refer to the nuisances as if it were also a unique variable from now on.
\fi

Diagram in Figure~\ref{fig:venn} schematizes the relationships between all the elements described in this section.
\vspace{-5pt}
\begin{figure}[ht]
    \centering
    \includegraphics[width=0.9\linewidth]{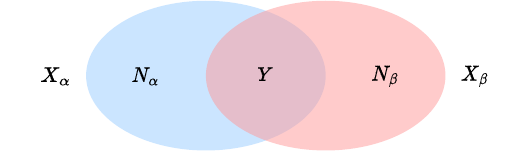}
    \vspace{-5pt}
    \caption{Diagram of the inputs, essence and nuisances}
    \label{fig:venn}
    \vspace{-15pt}
\end{figure}

\subsection{Representations} \label{subsec:representations}
\begin{definition} \label{def:representation}
    We say that a variable $Z_\alpha$ is a \textit{representation} of an input $X_\alpha$ if $Z_\alpha$ is a stochastic function of $X_\alpha$ or, equivalently, if $Z_\alpha$ is fully defined by $p(z_\alpha|x_\alpha)$. 
\end{definition}
\vspace{-5pt}

Given a representation $Z_\alpha$ of $X_\alpha$, the following Markov chains are satisfied:
\begin{gather}
    Y \leftrightarrow X_\alpha \leftrightarrow Z_\alpha \label{eq:markov_essence}\\
    N_\alpha \leftrightarrow X_\alpha \leftrightarrow Z_\alpha \label{eq:markov_nuisances}
\end{gather}
The goal of representation learning is to obtain representations with desirable properties for the problem to be solved. In the case of multimodal learning we consider two properties to be desirable: 
\begin{enumerate*}[label=(\roman*)]
    \item sufficiency and
    \item minimality
\end{enumerate*}
\cite{achille2018emergence, achille2018information}.
We define these properties and argue their desirability next.

\begin{definition}
    Given a representation $Z_\alpha$ of the input $X_\alpha$ and an essence $Y$. We call $Z_\alpha$ \textit{sufficient} if it satisfies:
    \begin{equation}
        I(Z_\alpha;Y) = I(X_\alpha;Y)
    \end{equation}
\end{definition}
\vspace{-10pt}
i.e., a representation is sufficient if it preserves the essence in its entirety or, equivalently, if it preserves all the information that is common to both modalities.  
Because of Equation~\eqref{eq:markov_essence} we know by the Data Processing Inequality (DPI) that $I(Z_\alpha;Y) \leq I(X_\alpha;Y)$, i.e. $I(X_\alpha;Y)$ is an upper bound of $I(Z_\alpha;Y)$. Thus, the objective to be optimized to obtain a sufficient representation is:
\begin{equation} \label{eq:opt_suf}
    \max_{Z_\alpha} I(Z_\alpha;Y)
\end{equation}
Informally, \textit{sufficiency is connected to the performance of our representations in downstream tasks}. Our representations must have all the information in the essence to perfectly solve \textit{all} the tasks that can be derived from the essence. Here, we assume that tasks that are not in the essence cannot be solved. For example, if we have a set of images of dogs and a set of captions describing different aspects of them except the color, we cannot expect the text encoder to be able to \textit{understand} the word ``brown" even if there are brown dogs in our set of images.
We formalize this next.
\begin{theorem} \label{theo:sufficient}
    Let $Y$ and $Z_\alpha$ be the essence and a representation of input $X_\alpha$ respectively, and let $\mathcal{T}=\{T: T=f(Y)\}$ be the set of all the deterministic functions of $Y$ (i.e., all the tasks derived from $Y$). Then, we have that:
    \begin{equation*}
        p(t|z_\alpha)=p(t|x_\alpha) \ \forall \ T \in \mathcal{T} \Longrightarrow I(Z_\alpha;Y)=I(X_\alpha;Y)
    \end{equation*}
\end{theorem}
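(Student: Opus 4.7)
The plan is to observe that the hypothesis, while stated over the entire family $\mathcal{T}$, already delivers all its force when specialized to a single task, and then to deduce the conclusion from the equality condition of the Data Processing Inequality.

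First, I would note that the identity map $y \mapsto y$ is a deterministic function of $Y$, so $Y$ itself belongs to $\mathcal{T}$. Applying the hypothesis to this particular task gives $p(y \mid z_\alpha) = p(y \mid x_\alpha)$ for every pair $(x_\alpha, z_\alpha)$ in the support of the joint distribution. At the same time, Equation~\eqref{eq:markov_essence} yields the Markov chain $Y \leftrightarrow X_\alpha \leftrightarrow Z_\alpha$, which is equivalent to $p(y \mid x_\alpha, z_\alpha) = p(y \mid x_\alpha)$. Combining these two identities gives $p(y \mid x_\alpha, z_\alpha) = p(y \mid z_\alpha)$, i.e., the reverse Markov chain $X_\alpha \leftrightarrow Z_\alpha \leftrightarrow Y$ also holds.

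With both Markov chains in hand, I would apply the chain rule of mutual information to $I(X_\alpha, Z_\alpha; Y)$ in two different orders. Expanding with $X_\alpha$ first gives $I(X_\alpha; Y) + I(Z_\alpha; Y \mid X_\alpha)$, whose second term vanishes by the original Markov chain of Equation~\eqref{eq:markov_essence}. Expanding with $Z_\alpha$ first gives $I(Z_\alpha; Y) + I(X_\alpha; Y \mid Z_\alpha)$, whose second term vanishes by the reverse Markov chain obtained above. Equating the two expansions yields $I(Z_\alpha; Y) = I(X_\alpha; Y)$, as required.

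The only non-mechanical step is recognizing that $Y$ itself lies in $\mathcal{T}$; once that observation is made, the argument reduces to the standard DPI-equality manipulation, so I do not anticipate a substantive obstacle. It is worth remarking that the full set $\mathcal{T}$ is actually not needed for the conclusion: the single task $T = Y$ already suffices, which in turn hints that the converse direction (sufficiency implies $p(t \mid z_\alpha) = p(t \mid x_\alpha)$ for every $T \in \mathcal{T}$) should follow symmetrically from the same two-way chain-rule argument.
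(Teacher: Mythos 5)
Your proof is correct. Like the paper, you hinge everything on the observation that the identity map places $Y$ itself in $\mathcal{T}$, but the two arguments then diverge mechanically. The paper chains DPI inequalities --- $I(T;X_\alpha)=I(T;Z_\alpha)\le I(Y;Z_\alpha)\le I(Y;X_\alpha)$ for every $T=f(Y)$ --- and collapses the sandwich by setting $T=Y$; in doing so it simply asserts that $p(t|x_\alpha)=p(t|z_\alpha)$ implies $I(T;X_\alpha)=I(T;Z_\alpha)$. You instead convert the pointwise equality of conditionals (for $T=Y$), together with the forward chain of Equation~\eqref{eq:markov_essence}, into the reverse Markov chain $X_\alpha\leftrightarrow Z_\alpha\leftrightarrow Y$, and then read off the conclusion from the two expansions of $I(X_\alpha,Z_\alpha;Y)$. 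This buys you two things: it supplies the justification the paper leaves implicit (why equal conditional laws under the joint coupling of $(X_\alpha,Z_\alpha)$ yield equal mutual informations), and it makes transparent that the single task $T=Y$ already carries all the force of the hypothesis; your closing remark that the converse follows from the same double expansion is likewise correct. What the paper's version buys in exchange is the explicit intermediate inequality $I(T;X_\alpha)=I(T;Z_\alpha)\le I(Y;Z_\alpha)$ for an arbitrary task $T$, which stays closer to the informal ``performance on all downstream tasks'' reading of sufficiency that motivates the theorem.
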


\begin{definition}
    Given a representation $Z_\alpha$ and the nuisances $N_\alpha$ of an input $X_\alpha$. We call $Z_\alpha$ \textit{minimal} if it satisfies:
    \begin{equation}
        I(Z_\alpha;N_\alpha) = 0
    \end{equation}
\end{definition}
\vspace{-10pt}
i.e., a representation is minimal if it eliminates all the nuisances of its modality or, in other words, if all the information that it contains can also be found in the input of the other modality. 
Since mutual information is non-negative, the objective to be optimized in order to obtain a minimal representation is:
\vspace{-5pt}
\begin{equation} \label{eq:opt_min}
    \min_{Z_\alpha} I(Z_\alpha;N_\alpha)
\end{equation}
Informally, \textit{minimality is connected to representational alignment}. As explained in section \ref{sec:preliminaries}, we can have representations with a good performance in a wide variety of downstream tasks but misaligned.
Intuitively, even if two representations $Z_\alpha$ and $Z_\beta$ have all the information about the essence (i.e., they are sufficient), if they also contain information about nuisances (i.e., they are not minimal), then the information they are encoding is different and, consequently, they will be misaligned. 
Revisiting the previous example, the sufficient representations of an image of \textit{a \textbf{yellow} Labrador Retriever} and of an image of \textit{a \textbf{brown} Labrador Retriever} could be different from each other, since they can be coding different information. However, the sufficient representations corresponding to the captions of each image will be presumably the same, since both images have presumably the same caption. Therefore, the presence of nuisances in the representations could cause misalignment, as stated next.
\begin{theorem}[Informal] \label{theo:minimal}
    Let $Z_\alpha$ and $Z_\beta$ be the representation of some inputs with nuisances $N_\alpha$ and $N_\beta$, respectively, such that $Z_\alpha$ and $Z_\beta$ are aligned in the sense of equation \eqref{eq:cka}. Then, $I(Z_\alpha;N_\alpha)=I(Z_\beta;N_\beta)=0$.
\end{theorem}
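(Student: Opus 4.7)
The plan is to split the argument into two independent pieces: a cross-modality independence result ($N_\alpha \perp X_\beta$, and symmetrically) that follows purely from the definitions of essence and nuisance, and a functional-dependence consequence of $CKA=1$. Two applications of the Data Processing Inequality (DPI) will glue the pieces together.

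First I would establish that $I(X_\beta; N_\alpha) = 0$. From \eqref{eq:nuisances_2} one reads $H(N_\alpha)-I(X_\alpha;N_\alpha)=H(N_\alpha\mid X_\alpha)=0$, so $N_\alpha$ is a deterministic function of $X_\alpha$. Combined with the Markov chain \eqref{eq:essence_1}, DPI gives $I(X_\beta; N_\alpha \mid Y) \leq I(X_\beta; X_\alpha \mid Y) = 0$. Together with $I(Y;N_\alpha)=0$ from \eqref{eq:nuisances_1}, the chain rule yields
\[
I(X_\beta; N_\alpha) \leq I(Y, X_\beta; N_\alpha) = I(Y;N_\alpha) + I(X_\beta; N_\alpha \mid Y) = 0.
\]
Because $Z_\beta$ is a stochastic function of $X_\beta$ (Definition \ref{def:representation}), one more DPI step yields $I(Z_\beta; N_\alpha) = 0$, and by an identical argument $I(Z_\alpha; N_\beta) = 0$.

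Next I would translate the alignment hypothesis into a functional statement. Since $HSIC$ is an inner product in the tensor-product RKHS and \eqref{eq:cka} is its normalized form, $CKA=1$ is precisely the Cauchy--Schwarz equality, forcing the centered kernel feature maps of $Z_\alpha$ and $Z_\beta$ to be proportional. Under appropriate regularity on the kernels (characteristic, injective feature map), this proportionality lifts to a measurable bijection $g$ with $Z_\alpha = g(Z_\beta)$ almost surely, so the Markov chain $N_\alpha \leftrightarrow Z_\beta \leftrightarrow Z_\alpha$ holds. A final DPI step gives $I(Z_\alpha; N_\alpha) \leq I(Z_\beta; N_\alpha) = 0$, and symmetrically $I(Z_\beta; N_\beta) \leq I(Z_\alpha; N_\beta) = 0$, which is the claim.

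The main obstacle I anticipate is exactly this last reduction: $CKA$ is a finite-sample statistic, so extracting a genuine bijection from $CKA=1$ requires kernel regularity and measure-theoretic assumptions on the joint law of $(Z_\alpha, Z_\beta)$, which is precisely why the theorem is labelled \emph{informal}. Once those assumptions are in place, the remainder is a clean assembly of Markov-chain and chain-rule identities, and the argument is fully symmetric in $\alpha$ and $\beta$.
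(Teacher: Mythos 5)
Your argument is correct (to the same informal standard the theorem itself admits) but follows a genuinely different route from the paper. The paper proves the contrapositive concretely at the level of kernel entries: if $I(Z_\alpha;N_\alpha)\neq 0$, then $Z_\beta$ can only be a surjective (many-to-one) image of $Z_\alpha$, so among infinitely many samples there is a pair with $z_\alpha^{(l)}\neq z_\alpha^{(l')}$ but $z_\beta^{(l)}=z_\beta^{(l')}$, which makes the Gram matrices $K$ and $L$ disagree and hence $CKA<1$; it then remarks that with finite samples this failure may not be observed. You instead argue directly, and your key addition is the cross-modality independence lemma $I(X_\beta;N_\alpha)=0$ (hence $I(Z_\beta;N_\alpha)=0$ by DPI), derived cleanly from \eqref{eq:essence_1}, \eqref{eq:nuisances_1} and \eqref{eq:nuisances_2} via the chain rule $I(Y,X_\beta;N_\alpha)=I(Y;N_\alpha)+I(X_\beta;N_\alpha\mid Y)$. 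This lemma is exactly the fact the paper's proof uses implicitly (it is what forces the collisions in $Z_\beta$), so you have made rigorous a step the paper leaves tacit. Both proofs share the same soft spot: extracting a functional relation between $Z_\alpha$ and $Z_\beta$ from $CKA=1$ (your Cauchy--Schwarz/characteristic-kernel step, the paper's asserted surjection), and you correctly identify this as the reason the theorem is labelled informal. The trade-off is that the paper's version makes the finite-sample caveat transparent, while yours isolates the information-theoretic content and makes the $\alpha\leftrightarrow\beta$ symmetry explicit; note also that for the final DPI step $I(Z_\alpha;N_\alpha)\le I(Z_\beta;N_\alpha)$ you only need $Z_\alpha$ to be a function of $Z_\beta$, so bijectivity is needed only to run the symmetric inequality for $N_\beta$.
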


Summarizing the above, we have that our ideal representation should be a minimal sufficient statistic for $Y$. Equivalently, it should contain \textit{only} (minimal) \textit{all} (sufficient) the information that is common to both modalities (a.k.a. the essence).
In this scenario, similarly to Lemma~\ref{lemma:essences}, we know that all the ideal representations create the same partition of the input. This connects to the ``Anna Karenina principle" \footnote{\ ``All happy families are alike; each unhappy family is unhappy in its own way." (Tolstoy, 1877). This principle was popularized in \cite{diamond1999guns} to illustrate why only a small number of wild animals have been successfully domesticated over the course of history.} that has been mentioned in different works in representation learning to hypothesize that all the well-performing models learn roughly the same internal representations \cite{bansal2021revisiting, huh2024platonic}. Here, all the minimal sufficient (``\textit{happy}") representations (``\textit{families}") create the same partition of the input (``\textit{are alike}").

\vspace{-5pt}
\section{Obtaining Minimal Sufficient Representations} \label{sec:obtaining}
We have discussed in the previous section the importance of minimal sufficient representations for good performance and alignment. We describe next a method to obtain them, which connects to the Information Bottleneck principle. 

\vspace{-5pt}
\subsection{Obtaining Sufficient Representations} \label{subsec:obtaining_sufficient}
Equation~\eqref{eq:opt_suf} shows that $I(Z_\alpha;Y)$ must be maximized to find a representation $Z_\alpha$ that is sufficient. Since the essence $Y$ is a variable that we have defined for our formulation whose distribution is unknown, calculating this term could seem problematic. 
However, in Appendix \ref{app:opt_suf_fin} we show that $I(Z_\alpha;Y)=I(Z_\alpha;X_\beta)$. Thus, our objective becomes
\begin{equation} \label{eq:opt_suf_fin}
    \max_{Z_\alpha} I(Z_\alpha; X_\beta)
\end{equation}
Computing exactly $I(Z_\alpha;X_\beta)$ is in general intractable, since it involves integrating over the entire space of $\beta$-modality inputs. However, we can obtain a lower bound of $I(Z_\alpha;X_\beta)$: 
since $Z_\beta$ is a representation of $X_\beta$, we have that $Z_\alpha \leftrightarrow X_\beta \leftrightarrow Z_\beta$ and, by the DPI, it follows that $I(Z_\alpha; Z_\beta) \leq I(Z_\alpha; X_\beta)$. That is, $I(Z_\alpha;Z_\beta)$ is a lower bound of $I(Z_\alpha;Y)$. Analogously, $I(Z_\beta;Z_\alpha) \leq I(Z_\beta;Y)$, so given the symmetry of the mutual information, we must maximize $I(Z_\alpha; Z_\beta)$ in order to jointly maximize $I(Z_\alpha;Y)$ and $I(Z_\beta;Y)$. Thus, the objective becomes:
\begin{equation} \label{eq:opt_suf_lb}
    \max_{Z_\alpha, Z_\beta} I(Z_\alpha; Z_\beta)
\end{equation}
Again, computing exactly $I(Z_\alpha; Z_\beta)$ requires integrating over the representation spaces, which is in general intractable. 
However, as explained in section \ref{sec:preliminaries}, minimizing $\mathrm{InfoNCE}$ loss is equivalent to maximizing a lower bound of $I(Z_\alpha; Z_\beta)$. Thus, \textit{encoders optimized through $\mathrm{InfoNCE}$ tend to give sufficient representations}.
However, the resulting representations are not necessarily minimal due to the fact that this loss imposes no conditions on $I(Z_\alpha;N_\alpha)$. 
We derive in the next section a term that aims to increase the degree of minimality of the representations.

\vspace{-5pt}
\subsection{Obtaining Minimal Representations} \label{subsec:obtaining_minimal}
Equation \eqref{eq:opt_min} shows that $I(Z_\alpha;N_\alpha)$ must be minimized to obtain a representation $Z_\alpha$ that is minimal.
Since $N_\alpha$ is an abstract concept whose distribution is unknown, calculating this term could seem problematic. 
However, by the DPI and equation \eqref{eq:markov_nuisances}, we have that $I(Z_\alpha;N_\alpha) \leq I(Z_\alpha;X_\alpha)$. 
Thus, the objective to obtain a minimal representation becomes:
\begin{equation} \label{eq:opt_min_fin}
    \min_{Z_\alpha} I(Z_\alpha; X_\alpha)
\end{equation}
Again, computing exactly $I(Z_\alpha; X_\alpha)$ requires integrating over the representation and input spaces, which is intractable. However, we demonstrate in Appendix \ref{app:opt_min_ub} that:
\begin{equation} \label{eq:opt_min_ub}
    I(Z_\alpha; X_\alpha) \leq \underset{p(x_\alpha,x_\beta)}{\mathbb{E}} \left[ D_{KL} \left( p_{\theta_\alpha}(z|x_\alpha) || p_{\theta_\beta}(z|x_\beta) \right) \right]
\end{equation}
Therefore, we can minimize the given upper bound to minimize $I(Z_\alpha; X_\alpha)$. 
That is, the distributions of the representations of a positive pair of data from different modalities should be as similar as possible. 
Intuitively, if $Z_\alpha$ and $Z_\beta$ are equal, then they can be affected only by the essence but not by the nuisances.

\vspace{-10pt}
\paragraph{Spherical Gaussian Case} \label{subsec:gaussian_case}
The upper bound of equation \eqref{eq:opt_min_ub} does not have a closed form in general. 
However, it is common to assume in practice that the representations distributions given the input are Gaussian, in which case, KL Divergence becomes tractable.
In the case in which $p_{\theta_\alpha}(z|x_\alpha) = \mathcal{N} \left(z; \mu_{\theta_\alpha}(x_\alpha), \sigma^2 I\right)$ and $p_{\theta_\beta}(z|x_\beta) = \mathcal{N} \left(z; \mu_{\theta_\beta}(x_\beta), \sigma^2 I \right)$, as shown in Appendix \ref{app:opt_min_ub_gaussian}, we reach:
\begin{equation} \label{eq:opt_min_ub_gaussian}
\begin{gathered}
    \underset{p(x_\alpha,x_\beta)}{\mathbb{E}} \left[ D_{KL} \left( p_{\theta_\alpha}(z|x_\alpha) || p_{\theta_\beta}(z|x_\beta) \right) \right] \propto \\
    \underset{p(x_\alpha,x_\beta)}{\mathbb{E}} \left[ \left\|\mu_{\theta_\alpha}(x_\alpha) - \mu_{\theta_\beta}(x_\beta)\right\|^2_2 \right] = \mathcal{L}_{M}
\end{gathered}
\end{equation}

\vspace{-5pt}
\subsection{Information Bottleneck for two Modalities}
Combining equations~\eqref{eq:opt_suf_fin} and~\eqref{eq:opt_min_fin}, the objective to obtain a representation $Z_\alpha$ that is sufficient and minimal becomes:
\begin{equation} \label{eq:opt_ib}
    \max_{Z_\alpha} I(Z_\alpha;X_\beta) - \beta I(Z_\alpha; X_\alpha)
\end{equation}
This is equivalent to an information bottleneck in which the task is the input of the other modality $X_\beta$. That is, $Z_\alpha$ must retain \textit{only all} the information that is common between $X_\alpha$ and $X_\beta$. The same applies for $Z_\beta$.
Combining equations \eqref{eq:opt_suf_lb} and \eqref{eq:opt_min_ub_gaussian}, we have that this is equivalent to minimizing:
\begin{equation} \label{eq:opt_ib_fin}
    \mathcal{L} = \mathcal{L}_{\mathrm{InfoNCE}} + \beta \mathcal{L}_M
\end{equation}

\section{Toy Experiment} \label{sec:toy_exp}
The objectives of this experiment are to empirically validate the different statements made throughout the previous sections and understand the relations between the different elements of our formulation. For this purpose, we use some datasets typically employed in disentanglement related tasks \cite{wang2024disentangled}. Concretely, DSprites \cite{dsprites17}, MPI3D \cite{gondal2019transfer} and Shapes3D \cite{3dshapes18} are used.
These datasets contain images and labels that represent multiple independent factors of variation. We jointly train an image encoder and a factors encoder (i.e., images and factors are the two modalities). The reason to use these datasets is that we can control the amount of factors that we input to the encoder, thus controlling the information imbalance between both modalities. 
Unless otherwise stated, a ResNet20 \cite{he2016deep} is used as image encoder, an MLP as encoder for the factors \footnote{We encode the factors using one-hot.} and temperature in the $\mathrm{InfoNCE}$ loss is a trainable parameter initialized to 0.07. More details are given in Appendix \ref{app:experimental}.

\subsection{Does the contrastive loss alone remove nuisances?} \label{subsec:remove_nuisances}
To answer this question we propose several scenarios per dataset. In each scenario we provide all but one factor to the encoder, i.e., the nuisances of the image modality $N_\alpha$ are that missing factor.
%, since the information about this is present in the image but not in the factors input. 
Thus, if the contrastive loss were eliminating the nuisances, then the image representations $Z_\alpha$ should contain no information about $N_\alpha$, i.e., $I(Z_\alpha; N_\alpha)=0$.
We calculate a lower bound of this mutual information $\hat{I}(Z_\alpha,N_\alpha)$ by training a linear classifier from $Z_\alpha$ to $N_\alpha$, following \cite{xu2020theory}. %More details on this estimation are given in Appendix (ref).  
We show in Table \ref{tab:urr} the values of $\frac{\hat{I}(Z_\alpha; N_\alpha)}{H(N_\alpha)}$ for each dataset and category of factors\footnote{We organize the factors into categories for ease of understanding of the conclusions. Information on what factors each category is composed of is provided in Appendix \ref{app:experimental}.}.
This value encodes a lower bound of the ratio of the uncertainty of $N_\alpha$ that is reduced by observing $Z_\alpha$ and its value ranges from 0 to 1 (we call it uncertainty reduction ratio or simply URR), so if the contrastive loss alone removed all the nuisances, its value would be zero.
We can extract the following conclusions from this:
\begin{enumerate*}[label=(\roman*)]
    \item a non-negligible amount of information about the missing factors is present in the image representation for every category;
    \item image encoder preserves more information on some categories than on others; and
    \item some categories are almost equally conserved among the datasets.
\end{enumerate*}
We hypothesize that the last two points could be due to the inductive biases of the convolutional architecture of the image encoder \cite{cohen2016inductive, mitchell2017spatial, wang2023theoretical}, but exploring this point is out of the scope of this work.
%An exception of the last point is the case of Shape in DSprites, whose URR is much higher compared to the other two datasets, which could be caused by the simplicity of shapes factors in the..

\begin{table}[hbt!]
\centering
\caption{URR (in percentages) for each dataset and category. Some factors are not used because they do not fall into any category.}
\vspace{-8pt}
\label{tab:urr}
\begin{tabular}{lccc}
\hline
              & DSprites                     & MPI3D                     & Shapes3D                     \\ \hline
Location      & $16.1  \pm 3.7$              & $12.4  \pm 7.3$           & $8.5  \pm 0.1$              \\
Shape         & $77.1  \pm 4.6$              & $10.3  \pm 1.0$           & $8.7  \pm 0.4$              \\
Size          & $66.3  \pm 3.1$              & $37.8  \pm 0.9$           & $7.2  \pm 0.5$              \\
Objects Color & $-$                          & $68.8  \pm 2.3$           & $54.1  \pm 2.0$              \\ \hline
\end{tabular}
\vspace{-10pt}
\end{table}

\vspace{-2pt}
\paragraph{Not all architectures remove nuisances to the same extent}
It is well established that different neural architectures introduce distinct inductive biases \cite{raghu2021vision}. Consequently, the extent to which nuisance factors are retained in the learned representations can vary depending on the model architecture. To investigate this, we replicate the previous experiment using a small Vision Transformer (ViT) \cite{dosovitskiy2020image} as the image encoder. Table~\ref{tab:urr_vit} reveals two key observations:
\begin{enumerate*}[label=(\roman*)]
\item local information—particularly \textit{Location}—is less preserved in ViTs, likely due to their global attention mechanisms favoring long-range dependencies; and
\item more global features—such as \textit{Color}—are comparably preserved in both convolutional and transformer-based models.
\end{enumerate*}
We emphasize that these trends may also depend on other architectural choices, such as the encoder depth, as explored in the following paragraph.

\begin{table}[hbt!]
\centering
\caption{URR (in percentages) for each dataset and category for ViT-based encoder.}
\vspace{-8pt}
\label{tab:urr_vit}
\begin{tabular}{lccc}
\hline
              & DSprites                     & MPI3D                     & Shapes3D                     \\ \hline
Location      & $2.8  \pm 0.6$               & $2.8  \pm 0.3$            & $1.1  \pm 0.1$              \\
Shape         & $64.9  \pm 1.4$              & $7.0  \pm 0.4$            & $8.7  \pm 0.2$              \\
Size          & $30.7  \pm 3.5$              & $20.8  \pm 3.5$           & $6.9  \pm 1.5$              \\
Objects Color & $-$                          & $63.5  \pm 9.8$           & $53.5  \pm 1.6$              \\ \hline
\end{tabular}
\vspace{-10pt}
\end{table}

\paragraph{Deeper neural encoders remove more nuisances}
It has been argued that the success of Deep Learning can be explained through the fact that deep neural networks implicitly introduce an Information Bottleneck \cite{tishby2015deep, shwartz2017opening}. Intuitively, deterministic layers tend to remove information from the input because of the DPI. Thus, when the number of layers grows, the output of the neural network tends to be a more pruned version of the input but that preserves the information that is necessary to solve different downstream tasks \cite{alemi2016deep}. We hypothesize then that the use of deeper neural encoders will tend to remove more nuisances.
Effectively, we can observe in Figure \ref{fig:img_encoder} a trend among different factors in which deeper encoders remove more modality specific information. This can serve as an explanation for \textit{The Capacity Hypothesis} stated in \cite{huh2024platonic}, which says that bigger models are more likely to converge to a shared representation than smaller models. We hypothesize that this representation is shared because it contains little information about the nuisances.
\vspace{-5pt}
\begin{figure}[ht]
    \centering
    \includegraphics[width=0.45\textwidth]{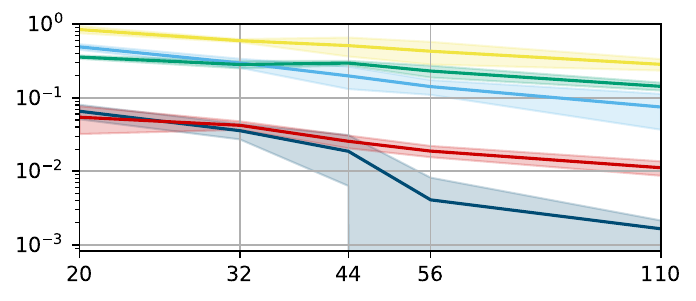}
    \vspace{-10pt}
    \caption{URR (y-axis) for different number of layers of the image neural encoder (x-axis). Same legend as Figure \ref{fig:temperature}.}
    \label{fig:img_encoder}
    \vspace{-15pt}
\end{figure}

\paragraph{Higher temperatures remove more nuisances}
\citet{wang2021understanding} observed that the value of the temperature in the $\mathrm{InfoNCE}$ loss considerably impacts on the entropy level of the representations. As stated in section \ref{subsec:representations}, alignment is closely related to the level of nuisances in the representations and, consequently, to their entropy. 
We run an experiment identical to the previous one for some factors in which the temperature is fixed. Its results are shown in Figure \ref{fig:temperature} and we observe that
\begin{enumerate*}[label=(\roman*)]
    \item higher values of temperature tend to remove more nuisances and
    \item not all the factors are equally affected by the changes in temperature.
\end{enumerate*}
\vspace{-5pt}
\begin{figure}[htbp]
    \centering
    \begin{subfigure}[t]{0.49\textwidth}
        \centering
        \includegraphics[width=\linewidth]{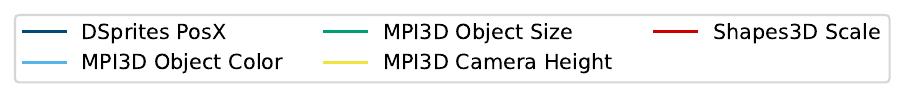}
        \vspace{-40pt}
    \end{subfigure}
    \begin{subfigure}[t]{0.45\textwidth}
        \centering
        \includegraphics[width=\linewidth]{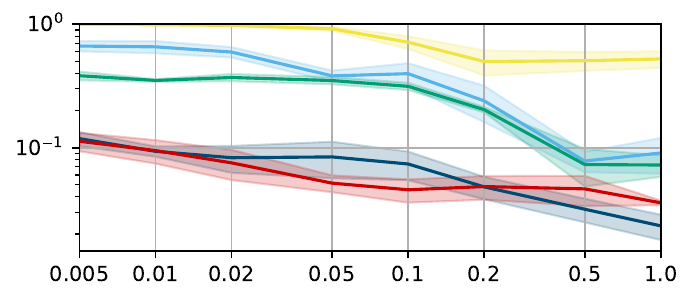}
    \end{subfigure}
    \vspace{-10pt}
    \caption{URR (y-axis) for different values of temperature (x-axis).}
    \label{fig:temperature}
    \vspace{-15pt}
\end{figure}

\subsection{Does the presence of nuisances in the representation negatively correlate with the level of  alignment?}
As informally demonstrated in section \ref{subsec:representations}, the fact that two representations are minimal is a necessary condition for them to be aligned. We hypothesize that misalignment is just an effect of an information imbalance in the representation space. To empirically analyze this phenomenon, we design an experiment similar to the previous one. In this case, more than one factor can be removed, i.e., $N_\alpha$ can be a set of factors. We generate 100 scenarios per dataset in which a randomly chosen subset of factors $N_\alpha$ is not provided as input to the factors encoder.
Similarly to the previous experiment, we calculate $\hat{I}(Z_\alpha; N_\alpha)$ and the $CKA$ metric. In Figure \ref{fig:alignment} it is shown that, for the three datasets, there exists a negative correlation between $\hat{I}(Z_\alpha; N_\alpha)$ and the alignment value.
\vspace{-10pt}
\begin{figure}[htbp]
    \centering
    \begin{subfigure}[t]{0.35\textwidth}
        \centering
        \includegraphics[width=\linewidth]{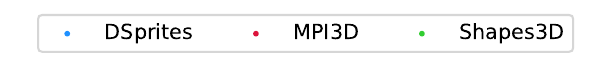}
        \vspace{-40pt}
    \end{subfigure}
    \begin{subfigure}[t]{0.42\textwidth}
        \centering
        \includegraphics[width=\linewidth]{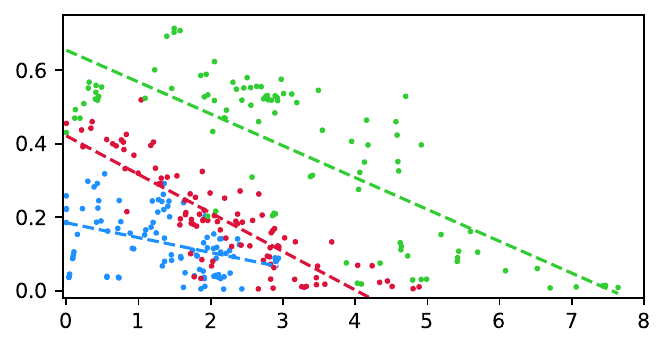}
    \end{subfigure}
    \vspace{-10pt}
    \caption{Alignment (y-axis) vs. $\hat{I}(Z_\alpha; N_\alpha)$ (x-axis).}
    \label{fig:alignment}
    \vspace{-15pt}
\end{figure}
\begin{figure*}[htbp]
    \centering
    \begin{subfigure}[t]{0.235\textwidth}
        \centering
        \includegraphics[width=\textwidth]{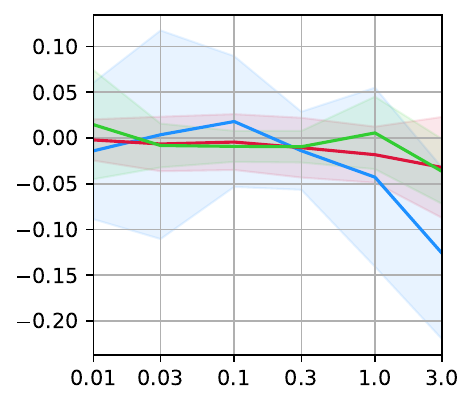}
        \vspace{-15pt}
        \caption{$\hat{I}(Z_\alpha; Y)$}
        \label{subfig:mi_essence}
    \end{subfigure}
    \hfill
    \begin{subfigure}[t]{0.235\textwidth}
        \centering
        \includegraphics[width=\textwidth]{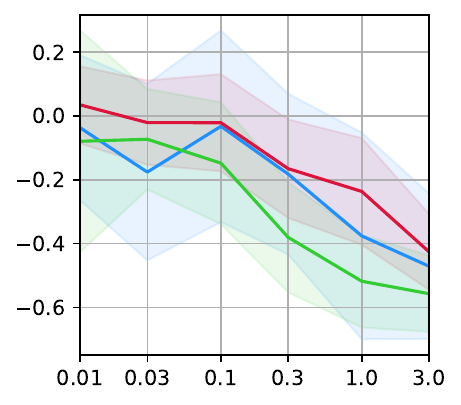}
        \vspace{-15pt}
        \caption{$\hat{I}(Z_\alpha; N_\alpha)$}
        \label{subfig:mi_nuisances}
    \end{subfigure}
    \hfill
    \begin{subfigure}[t]{0.235\textwidth}
        \centering
        \includegraphics[width=\textwidth]{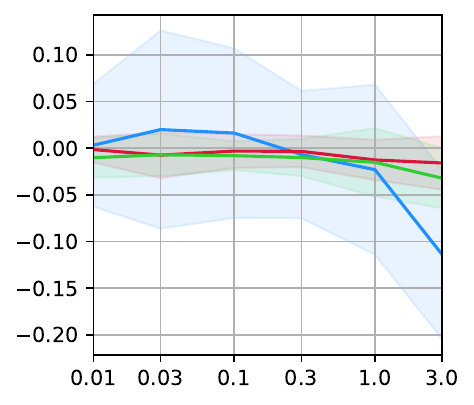}
        \vspace{-15pt}
        \caption{Accuracy of $Y$}
        \label{subfig:accuracy}
    \end{subfigure}
    \hfill
    \begin{subfigure}[t]{0.235\textwidth}
        \centering
        \includegraphics[width=\textwidth]{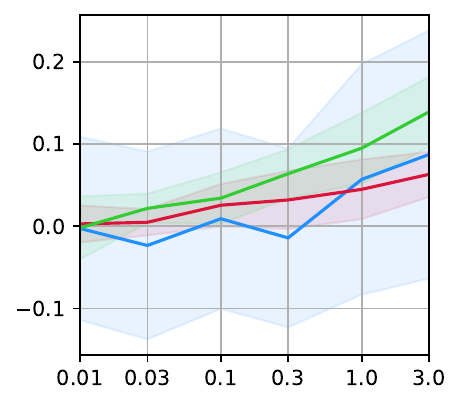}
        \vspace{-15pt}
        \caption{Alignment}
        \label{subfig:alignment}
    \end{subfigure}
    \vspace{-5pt}
    \caption{Relative change (with respect to the case in which $\beta=0$) of different measures (y-axis) vs. $\beta$ (x-axis). The temperature is equal to 0.01 in all the cases. Same legend as Figure \ref{fig:alignment}.}
    \label{fig:regularization}
    \vspace{-10pt}
\end{figure*}

\vspace{-5pt}
\subsection{Does our regularization term effectively increase the alignment level?}
To analyze this, we randomly select, for each dataset, 10 of the 100 scenarios above and we train the encoders for different values of $\beta$. In all of them we set the temperature fixed to $0.01$.
We show in Figure \ref{fig:regularization} the value of different measures for different values of $\beta$.
We can extract the following conclusions from this:
\begin{enumerate*}[label=(\roman*)]
    \item lower values of $\beta$ retain better the essence (Fig. \ref{subfig:mi_essence}), since increasing $I(Z_\alpha;Y)$ prevails over decreasing $I(Z_\alpha;N_\alpha)$;
    \item lower values of $\beta$ also tend to retain more nuisances, since more entropic representations are encouraged in this case (Fig. \ref{subfig:mi_nuisances});
    \item higher values of $\beta$ remove more nuisances, since they favor the decreasing of $I(Z_\alpha;N_\alpha)$ (Fig. \ref{subfig:mi_nuisances});
    \item higher values of $\beta$ also tend to discard more information of the essence, since they promote less entropic representations (Fig. \ref{subfig:mi_essence});
    \item representations with lower $I(Z_\alpha;Y)$ result in a lower accuracy (Figs. \ref{subfig:mi_essence} and \ref{subfig:accuracy}), as stated in Theorem \ref{theo:sufficient}; and
    \item representations with lower $I(Z_\alpha;N_\alpha)$ result in a higher alignment (Figs. \ref{subfig:mi_nuisances} and \ref{subfig:alignment}), as stated in Theorem \ref{theo:minimal}.
\end{enumerate*}

\vspace{-5pt}
\paragraph{On the \textit{Information Homeostasis} of the representations}
In the previous experiment the temperature has been set fixed. However, as shown in Figure \ref{fig:temperature}, lower temperatures tend to preserve more information of the nuisances.
Thus, the next question arises: \textit{Will the temperature be affected by the value of $\beta$?}
To answer it, we repeat the previous experiment but setting the temperature as a trainable parameter.
We show the results in Figure \ref{fig:regularization_train} and we observe that:
\begin{enumerate*}[label=(\roman*)]
    \item temperature tends to become lower when higher values of $\beta$ are used (Fig. \ref{subfig:temperature_train}); and
    \item this translates into the fact that nuisances tend not to be eliminated to the same extent as in the case in which the temperature is fixed (Fig. \ref{subfig:mi_nuisances} vs. \ref{subfig:mi_nuisances_train}).
\end{enumerate*}
We call this phenomenon \textit{Information Homeostasis}, since it seems that, when an external stimulus (i.e., increasing $\beta$) affects the encoder, this activates available mechanisms (i.e., decreasing the temperature) in order to preserve to the extent possible the entropy of the representations \cite{delmonte2011anatomy}.
This effect becomes more pronounced for the highest values of $\beta$. In these cases, the level of nuisances is similar to the case in which $\beta=0$, which reminds of an \textit{homeostatic range} \cite{kotas2015homeostasis}.
This is an intriguing phenomenon that is out of the scope of this work.
\vspace{-15pt}
\begin{figure}[htbp]
    \centering
    \begin{subfigure}[t]{0.235\textwidth}
        \centering
        \includegraphics[width=\textwidth]{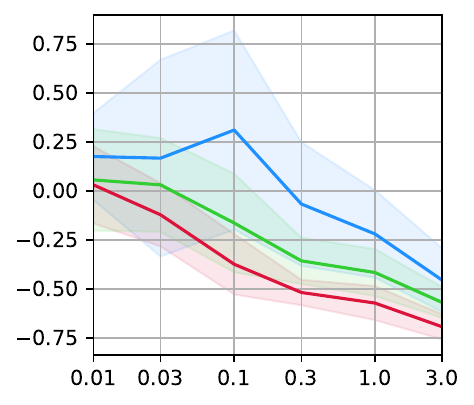}
        \vspace{-15pt}
        \caption{Temperature}
        \label{subfig:temperature_train}
    \end{subfigure}
    \hfill
    \begin{subfigure}[t]{0.235\textwidth}
        \centering
        \includegraphics[width=\textwidth]{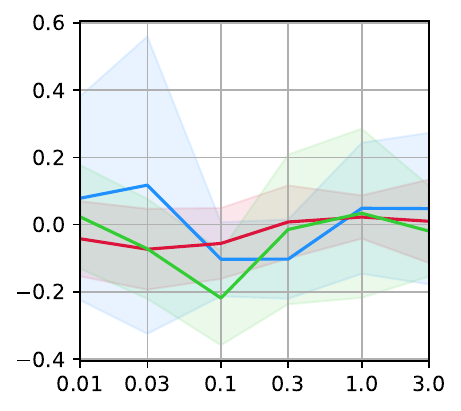}
        \vspace{-15pt}
        \caption{$\hat{I}(Z_\alpha; N_\alpha)$}
        \label{subfig:mi_nuisances_train}
    \end{subfigure}
    \vspace{-5pt}
    \caption{Relative change (with respect to the case in which $\beta=0$) of different measures (y-axis) vs. $\beta$ (x-axis). Temperature is a trainable parameter. Same legend as Figure \ref{fig:alignment}.}
    \label{fig:regularization_train}
    \vspace{-10pt}
\end{figure}

\begin{table*}[hbt!]
\centering
\caption{CIDEr \cite{vedantam2015cider}, BLEU@4 \cite{papineni2002bleu} and retrieval accuracy for Q-Formers trained with different loss functions.}
\vspace{-10pt}
\label{tab:image_captioning}
\begin{tabular*}{0.8\textwidth}{@{\extracolsep{\fill}}lcccc}
\toprule
                             & \textbf{CIDEr} & \textbf{BLEU@4} & \textbf{I2T R@1} & \textbf{T2I R@1} \\
\midrule
ITC+LM                       & $91.7 \pm 0.2$ & $28.6 \pm 0.1$ & $\bm{64.2 \pm 0.2}$ & $\bm{52.3 \pm 0.4}$ \\
ITC+LM+ITM                   & $91.8 \pm 0.5$ & $28.8 \pm 0.2$ & $61.4 \pm 0.6$ & $49.7 \pm 0.8$ \\ \midrule
ITC+LM+$0.01\mathcal{L}_M$   & $92.3 \pm 0.8$ & $29.1 \pm 0.4$ & $64.0 \pm 0.3$ & $\bm{52.3 \pm 0.5}$ \\
ITC+LM+$0.03\mathcal{L}_M$   & $92.6 \pm 0.3$ & $29.2 \pm 0.2$ & $63.9 \pm 0.4$ & $52.1 \pm 0.5$ \\
ITC+LM+$0.1\mathcal{L}_M$    & $\bm{93.0 \pm 0.3}$ & $\bm{29.4 \pm 0.3}$ & $63.0 \pm 0.5$ & $50.4 \pm 0.5$ \\
ITC+LM+$0.3\mathcal{L}_M$    & $90.5 \pm 0.4$ & $28.5 \pm 0.2$ & $59.6 \pm 0.4$ & $47.1 \pm 0.4$ \\
\bottomrule
\end{tabular*}
\vspace{-10pt}
\end{table*}

\section{Real-World Applications} \label{sec:real_app}
Several real-world applications benefits from aligned representations. These include those that consist in generating one modality from another.
We analyze what are the implications of introducing our regularization term in a real-world scenario. Concretely, we train a Q-Former with a frozen decoder-based LLM \cite{li2023blip} with different loss functions.
In all the cases, two terms are present: 
\begin{enumerate*}[label=(\roman*)]
    \item an image-text contrastive loss (ITC), i.e., the $\mathrm{InfoNCE}$ loss between image and text representations; and 
    \item a language model loss (LM), which trains the Q-Former to generate text through the LLM using images as the condition.
\end{enumerate*}
However, none of these losses explicitly encourages a high representational alignment. 
Thus, we experiment by adding: 
\begin{enumerate*}[label=(\roman*)]
    \item an image-text matching loss (ITM), which is the binary cross-entropy loss of a task in which the model must predict if an image-text pair is positive or negative \cite{chen2020uniter}; or
    \item our regularization term in equation \eqref{eq:opt_ib_fin}.
\end{enumerate*}
We note that, in contrast to ITM, our regularization term is modality-agnostic, computationally efficient and straightforward to implement.
COCO \cite{lin2014microsoft} is used to train and test our model. More details are given in Appendix \ref{app:experimental}.

\subsection{Image Captioning} \label{subsec:image_captioning}
We argue that, for optimal performance in image captioning, the learned image representations should contain as little information as possible about nuisance factors. When nuisance information is retained, representations may encode fine-grained visual details that the text decoder is not equipped to handle, as it has not been trained to exploit such information.
Table~\ref{tab:image_captioning} summarizes the performance of different models. We observe the following trends:
\begin{enumerate*}[label=(\roman*)]
    \item loss functions that promote alignment between modalities tend to improve image captioning performance;
    \item there is a trade-off between image captioning and retrieval: captioning benefits from minimal representations, whereas sufficient representations favors retrieval (as it is a downstream task);
    \item our loss slightly improves captioning performance for low values of $\beta$, with minimal degradation in retrieval performance;
    \item for moderate values of $\beta$, our loss substantially enhances captioning performance, though at a higher cost to retrieval accuracy; and
    \item at high values of $\beta$, captioning and retrieval performances drop sharply, as representations become overly compressed and fail to retain sufficient task-relevant information—mirroring the trend found in Figure~\ref{fig:regularization}.
\end{enumerate*}

We also show in Figure \ref{fig:image_captioning} and in Appendix \ref{app:image_captioning} some of the captions generated by the different models in Table \ref{tab:image_captioning}. 
We observe that, in those cases in which representations are less aligned, captions tend to be more entropic because the representations are as well. This, in some cases, translates into captions that have information that does not correspond to the image. 
From a geometric point of view, we believe that in a misaligned space, for example, the representations of "lots of trees" and "a tree" are closer in the image than in the text space and, thus, the text decoder "confuses" them.
\vspace{-5pt}
\begin{figure}[ht]
    \centering
    \includegraphics[width=\linewidth]{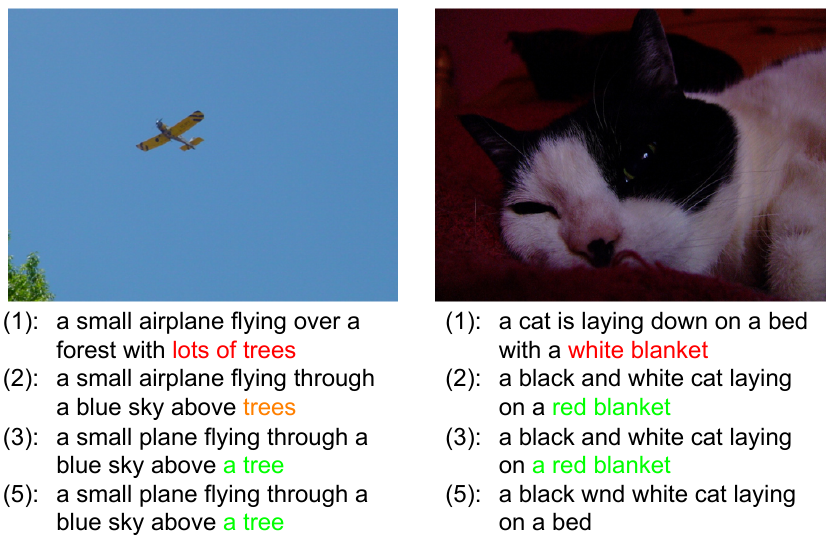}
    \vspace{-15pt}
    \caption{Captions generated by some of the trained models. Numbers correspondence is the same as in Table \ref{tab:image_captioning}.}
    \label{fig:image_captioning}
    \vspace{-5pt}
\end{figure}

\subsection{Multimodal Representation Space Arithmetic} \label{subsec:space_arithmetic}
Figure \ref{fig:space_arithmetic} shows image retrievals obtained from combining image and text representations from the Q-Former trained with $\beta=0.01$. Examples including other loss functions are found in Appendix \ref{app:space_arithmetic}, showing that those not encouraging alignment, result in worse multimodal retrievals.
\begin{figure}[ht]
    \centering
    \includegraphics[width=\linewidth]{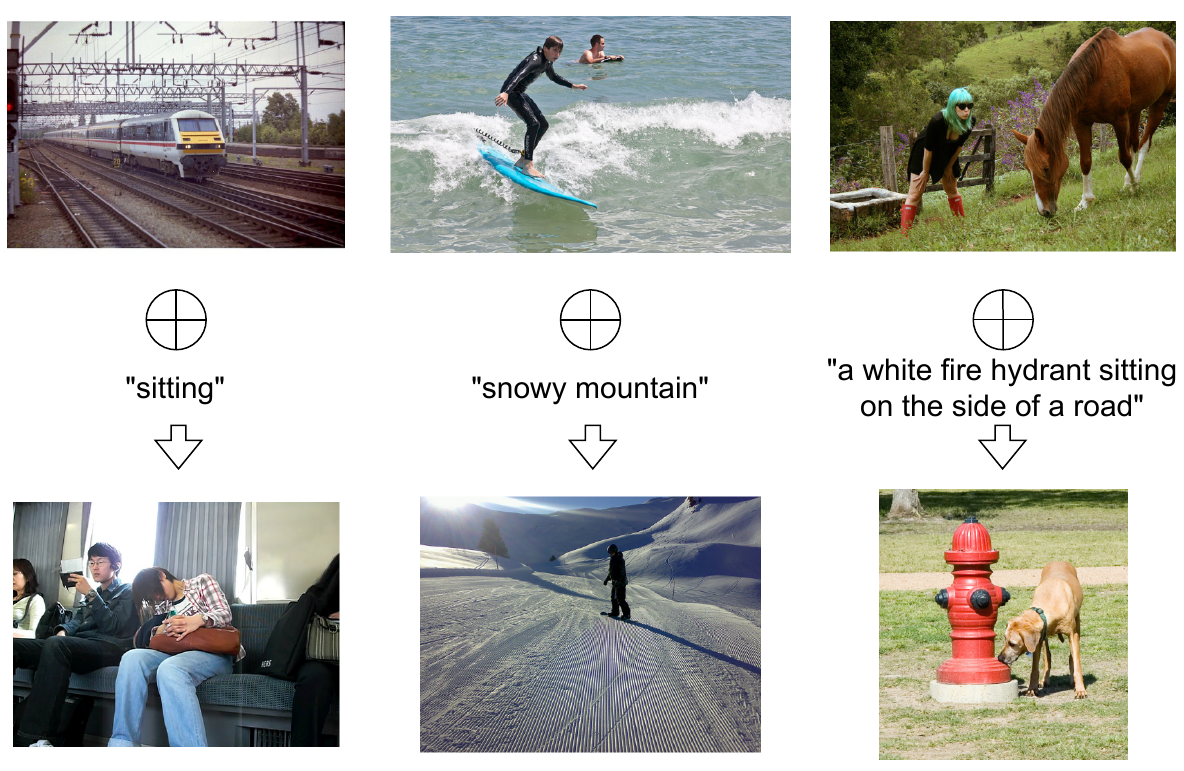}
    \vspace{-15pt}
    \caption{Multimodal image retrieval for $\beta=0.01$.}
    \label{fig:space_arithmetic}
    \vspace{-5pt}
\end{figure}

\section{Related Work}

\paragraph{Information Bottleneck and Contrastive Representation Learning}
Other works have previously connected IB to CRL, especially in the context of multi-view learning.
In \cite{tian2020makes}, the authors argue that in CRL good views for a given task are those that optimize an IB w.r.t. that given task. 
\citet{federici2020learning}, analogously to us, propose a loss function to obtain representations that retain only the information shared by the two views, which is considered to be the relevant for downstream tasks.
\citet{tsai2020self} build on the previous one and argue that including a reconstruction loss encourages to preserve the downstream task-relevant information.
In \cite{wang2022rethinking} it is argued that, in the multi-view setting, imposing a strong IB can be detrimental for downstream tasks, since it could be removing an excess of information in the representation. 
However, to the best of our knowledge, the use of the IB in the multimodal setting remains understudied and the present is the first work that explores this and connects it to the misalignment typically present between representations from different modalities.

\paragraph{Alignment of Multimodal Contrastive Learned Representations}
This is a well-studied field but still marked by a great amount of unanswered questions.
The first work that extensively studied the alignment in CRL was \cite{wang2020understanding} and demonstrated that, under infinite negative samples, the $\mathrm{InfoNCE}$ is globally minimized if the representations are perfectly aligned. They also define the representational alignment as in equation \eqref{eq:opt_min_ub}, which makes our formulation consistent with this work.
However, it was observed in \cite{liang2022mind} that there exists in practice a great misalignment between representations from different modalities. This misalignment becomes problematic for tasks that need to combine both modalities.
For example, \citet{chen2020uniter, li2021align, li2022blip, li2023blip} use modifications to the $\mathrm{InfoNCE}$ loss to obtain a better performance in tasks in which image representations serve to obtain text, such as Image Captioning or Visual Question Answering.
In the the opposite direction, \citet{ramesh2022hierarchical} use a generative model to transform text representations to image representations to train a text-conditioned image generator. In our view, this generative model serves to increase the entropy of the text representations to balance them with the image representations, which are typically more entropic.
To the best of our knowledge, \cite{schrodi2024two} is the first work in which this phenomenon is explained through the lens of an information imbalance. However, this imbalance is analyzed in the input space rather than in the representation space. Thus, aspects such as the encoder depth and hyperparameters or modifications of the loss function are not explored.

\section{Conclusions}
We give an explanation to the phenomenon of multimodal misalignment that usually emerges in encoders trained to minimize contrastive losses. These are designed to obtain representations that preserve the information about what is common to both modalities, but not to remove modality-specific information. 
We theoretically and empirically show that the presence of this modality-specific information in the representations is correlated with the misalignment phenomenon. 
We also examine the impact that different hyperparameters such as the temperature or encoder depth have on how much of this modality-information is removed.
We derive a term that can be added to the contrastive loss which aims to eliminate this modality-specific information and, thus, allows to obtain a more aligned representation space.
We find a phenomenon that we call \textit{Information Homeostasis}, which consists in the fact that encoders seem to prefer representations with more nuisances and they modify, if possible, some of their internal parameters for this purpose.
Finally, we show that our term in the loss function translates into a better performance in image captioning and seems to result in more consistent multimodal image retrievals.

\section*{Acknowledgements}
This work has received funding from the European Union’s Horizon 2020 research and innovation programme under the Marie Skłodowska-Curie grant agreement No 101007666, MCIN/AEI/10.13039/501100011033 under Grant PID2021-126061OB-C44, and the Government of Aragón (Grant Group T36 23R).
We are also grateful to the French National Research Agency for their support through the ANR-20-CE23-0012-01 (MIM) grant.

This work originated during the JSALT 2024 workshop. We gratefully acknowledge the researchers, mentors, and collaborators who took part in the workshop, as well as the staff at the Center for Language and Speech Processing at Johns Hopkins University, for cultivating a collaborative and intellectually rich environment that was instrumental in shaping this research. The workshop was partially supported by generous contributions from Amazon, Facebook, Google, and Microsoft.

\section*{Impact Statement}
This paper presents work whose goal is to advance the field of 
Machine Learning. There are many potential societal consequences 
of our work, none which we feel must be specifically highlighted here.

\bibliography{example_paper}
\bibliographystyle{icml2025}

%%%%%%%%%%%%%%%%%%%%%%%%%%%%%%%%%%%%%%%%%%%%%%%%%%%%%%%%%%%%%%%%%%%%%%%%%%%%%%%
%%%%%%%%%%%%%%%%%%%%%%%%%%%%%%%%%%%%%%%%%%%%%%%%%%%%%%%%%%%%%%%%%%%%%%%%%%%%%%%
% APPENDIX
%%%%%%%%%%%%%%%%%%%%%%%%%%%%%%%%%%%%%%%%%%%%%%%%%%%%%%%%%%%%%%%%%%%%%%%%%%%%%%%
%%%%%%%%%%%%%%%%%%%%%%%%%%%%%%%%%%%%%%%%%%%%%%%%%%%%%%%%%%%%%%%%%%%%%%%%%%%%%%%
\newpage
\appendix
\setcounter{theorem}{0}
\setcounter{lemma}{0}
\onecolumn

\section{Proofs of Sections \ref{sec:importance} and \ref{sec:obtaining}} \label{app:proofs}

\subsection{Proof of Lemma \ref{lemma:essences}} \label{app:lema_essences}
\begin{lemma}
    Let $Y$ and $Y'$ be essences of the same pair of modalities. Then, there exist a one-to-one transformation $\Psi$ such that $Y = \Psi(Y')$.
\end{lemma}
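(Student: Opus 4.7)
The plan is to show that every essence induces the \emph{same} partition on $X_\beta$ (and correspondingly on $X_\alpha$), from which a bijection between the ranges of $Y$ and $Y'$ follows by sending each equivalence class to its label under each essence.

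First I would unpack what the two Markov chains buy us. Chain rule gives
\[
I(X_\alpha;X_\beta,Y)=I(X_\alpha;Y)+I(X_\alpha;X_\beta\mid Y)=I(X_\alpha;X_\beta)+I(X_\alpha;Y\mid X_\beta).
\]
Equation~\eqref{eq:essence_1} kills the first conditional MI and Equation~\eqref{eq:essence_2} kills the second, so $I(X_\alpha;Y)=I(X_\alpha;X_\beta)$. Exactly the same argument applied to $Y'$ yields $I(X_\alpha;Y')=I(X_\alpha;X_\beta)$. Thus both $Y$ and $Y'$ achieve equality in the DPI along the chain $Y\leftrightarrow X_\beta\leftrightarrow X_\alpha$, i.e.\ both are sufficient statistics of $X_\beta$ for $X_\alpha$, and by Equation~\eqref{eq:essence_2} their conditional laws $p(y\mid x_\beta)$ and $p(y'\mid x_\beta)$ do not depend on $x_\alpha$.

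Second, I would read off the induced partition. Sufficiency $I(X_\alpha;Y)=I(X_\alpha;X_\beta)$ combined with $X_\alpha\perp X_\beta\mid Y$ forces $p(x_\alpha\mid y)$ to equal $p(x_\alpha\mid x_\beta)$ for every $x_\beta$ compatible with $y$; in particular, two points $x_\beta^{(1)},x_\beta^{(2)}$ lying in the support of the same value $y$ must share the same conditional $p(x_\alpha\mid\cdot)$, and conversely any two points with equal conditionals can be collapsed to the same $y$ without breaking either Markov chain. The equivalence relation $x_\beta^{(1)}\sim x_\beta^{(2)}\iff p(x_\alpha\mid x_\beta^{(1)})=p(x_\alpha\mid x_\beta^{(2)})$ is thus intrinsic to the joint distribution $p(x_\alpha,x_\beta)$ and does not involve $Y$. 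The same argument gives that $Y'$ carves $X_\beta$ along the same $\sim$. Define $\Psi$ by sending $y'\mapsto y$, where $y$ is the unique $Y$-label of the $\sim$-class that $y'$ indexes; this is well-defined, surjective, and injective, giving the required one-to-one transformation with $Y=\Psi(Y')$.

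The main obstacle is the ambiguity lurking in Definition~\ref{def:essence}: strictly speaking, an essence could be augmented by independent noise (e.g.\ $\widetilde Y=(Y,U)$ with $U\perp\!\!\!\perp(X_\alpha,X_\beta,Y)$) and still satisfy both Markov chains, which would break literal bijectivity. I would handle this the way the paper itself signals in the paragraph after the lemma --- by identifying essences that agree on the partition they induce, so that ``one-to-one'' is meant up to such inessential randomisation, i.e.\ between the minimal representatives of each essence class. Under this reading the partition argument above supplies $\Psi$ and closes the proof.
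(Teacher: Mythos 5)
Your route is genuinely different from the paper's. The paper's proof is a two-line appeal to the classical uniqueness of minimal sufficient statistics: it reads Definition~\ref{def:essence} as saying that $Y$ and $Y'$ are \emph{minimal} sufficient statistics of $X_\beta$ for $X_\alpha$, so each is a function of the other and the two functions are mutual inverses. You instead construct the canonical object explicitly --- the partition of $X_\beta$ by $x_\beta^{(1)}\sim x_\beta^{(2)}\iff p(x_\alpha\mid x_\beta^{(1)})=p(x_\alpha\mid x_\beta^{(2)})$ --- and build $\Psi$ by matching labels across the two essences. Your forward direction is sound, and in fact does not need the mutual-information computation: chains \eqref{eq:essence_1} and \eqref{eq:essence_2} give $p(x_\alpha\mid y)=p(x_\alpha\mid x_\beta,y)=p(x_\alpha\mid x_\beta)$ whenever $p(y,x_\beta)>0$, so every $Y$-cell sits inside a single $\sim$-class.

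The gap is in your ``conversely'' step. You show that points with equal conditionals \emph{can} be collapsed to one value of $y$ without breaking either Markov chain, but the lemma needs that the given essence \emph{does} collapse them, and Definition~\ref{def:essence} does not force this: $Y=X_\beta$ itself satisfies both chains (everything is conditionally independent of everything given $X_\beta$), yet it generally induces a strictly finer partition than $\sim$, so it cannot be in bijection with a coarser essence. In other words, the definition only pins the essence's partition down to a refinement of $\sim$, not to $\sim$ itself; this is a second failure mode on top of the noise-augmentation one you correctly flag. To be fair, the paper's own proof has the same hole --- it asserts minimality that appears nowhere in Definition~\ref{def:essence} --- and your explicit caveat that the statement should be read ``up to the induced partition'' is more candid than the paper's silence. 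Either proof is repaired by adding a minimality clause to the definition of essence (e.g.\ $H(Y)=I(X_\alpha;X_\beta)$, or that $Y$ is a function of every variable satisfying the two chains), after which your partition argument closes exactly.
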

\begin{proof}
    By Definition \ref{def:essence}, $Y$ and $Y'$ are minimal sufficient statistics of $X_\alpha$ for $X_\beta$. Then, by the definition of minimal sufficient statistic, there exist two functions $\Psi$ and $\Psi'$ such that $Y = \Psi(Y')$ and $Y'=\Psi'(Y)$. Then, $\Psi^{-1}=\Psi'$.
\end{proof}

\subsection{Proof of Theorem \ref{theo:sufficient}} \label{app:theo_sufficienct}
\begin{theorem}
    Let $Y$ and $Z_\alpha$ be the essence and a representation of input $X_\alpha$ respectively, and let $\mathcal{T}=\{T: T=f(Y)\}$ be the set of deterministic functions of $Y$ (i.e., all the tasks derived from $Y$). Then, we have that:
    \begin{equation*}
        p(t|z_\alpha)=p(t|x_\alpha) \ \forall \ T \in \mathcal{T} \Longrightarrow I(Z_\alpha;Y)=I(X_\alpha;Y)
    \end{equation*}
\end{theorem}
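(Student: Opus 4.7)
The key observation is that the identity map is a deterministic function, so $Y$ itself belongs to $\mathcal{T}$ (take $f = \mathrm{id}$, whence $T = Y$). Instantiating the hypothesis with this choice gives $p(y \mid z_\alpha) = p(y \mid x_\alpha)$ for every pair $(x_\alpha, z_\alpha)$ with positive joint mass. So the entire theorem should reduce to showing that this single conditional-distribution identity, combined with the Markov structure already present, collapses the two mutual informations.

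First, I would invoke the Markov chain in equation~\eqref{eq:markov_essence}, namely $Y \leftrightarrow X_\alpha \leftrightarrow Z_\alpha$, which yields $p(y \mid x_\alpha, z_\alpha) = p(y \mid x_\alpha)$ and therefore the DPI inequality $I(Z_\alpha; Y) \le I(X_\alpha; Y)$. This is one half. For the other half, I would feed in the instantiated hypothesis $p(y \mid z_\alpha) = p(y \mid x_\alpha)$: combining it with the identity just derived gives $p(y \mid x_\alpha, z_\alpha) = p(y \mid z_\alpha)$, i.e.\ the reverse Markov chain $Y \leftrightarrow Z_\alpha \leftrightarrow X_\alpha$ also holds. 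A second application of DPI now delivers $I(X_\alpha; Y) \le I(Z_\alpha; Y)$, and the two inequalities together finish the proof.

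An equivalent phrasing I would use as a sanity check is at the level of entropies: the two Markov chains together imply
\begin{equation*}
H(Y \mid X_\alpha) \;=\; H(Y \mid X_\alpha, Z_\alpha) \;=\; H(Y \mid Z_\alpha),
\end{equation*}
which, after subtracting from $H(Y)$, is exactly the desired $I(Z_\alpha; Y) = I(X_\alpha; Y)$.

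The main (and essentially only) obstacle is interpretive rather than technical: one must read the statement ``$p(t \mid z_\alpha) = p(t \mid x_\alpha)$ for all $T \in \mathcal{T}$'' as the pointwise equality of the two conditional distributions over the joint support of $(X_\alpha, Z_\alpha)$, so that it can legitimately be combined with the ambient Markov chain. Once that reading is fixed, the argument is a one-line application of DPI in both directions, and the role of the universally quantified family $\mathcal{T}$ is only to guarantee that $Y$ itself lies in it; no other element of $\mathcal{T}$ is actually needed, which is a nice structural observation worth noting in passing.
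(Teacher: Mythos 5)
Your proof is correct and rests on the same key idea as the paper's: instantiating the hypothesis at the identity function $T=Y$ and applying the DPI, using the Markov chain $Y \leftrightarrow X_\alpha \leftrightarrow Z_\alpha$ for one direction. The only difference is presentational — the paper first chains inequalities for a general $T\in\mathcal{T}$ and specializes to the identity at the end, whereas you specialize immediately and make the reverse Markov chain $Y \leftrightarrow Z_\alpha \leftrightarrow X_\alpha$ explicit, which if anything makes rigorous the step the paper asserts directly (that $p(t\mid x_\alpha)=p(t\mid z_\alpha)$ implies $I(T;X_\alpha)=I(T;Z_\alpha)$).
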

\begin{proof}
    First, we know from equation \eqref{eq:essence_1} that $I(Y;Z_\alpha) \leq I(Y;X_\alpha)$.
    Second, since $T=f(Y)$, we have the Markov Chain $Z_\alpha \leftrightarrow Y \leftrightarrow X_\alpha$ and, thus, by the DPI, $I(T;Z_\alpha) \leq I(Y;Z_\alpha)$. 
    Third, since $p(t|x_\alpha)=p(t|z_\alpha)$, we know that $I(T;X_\alpha) = I(T;Z_\alpha)$.
    Thus, we have that $I(T;X_\alpha) = I(T;Z_\alpha) \leq I(Y;Z_\alpha) \leq I(Y;X_\alpha)$.
    Finally, since $T$ can be any function of $Y$, it can be the identity function, in which case $I(T;X_\alpha) = I(Y;X_\alpha)$.
\end{proof}

\subsection{Proof of Theorem \ref{theo:minimal}} \label{app:theo_minimal}
\begin{theorem}[Informal]
    Let $Z_\alpha$ and $Z_\beta$ be two representations of a pair of inputs with nuisances $N_\alpha$ and $N_\beta$ respectively, such that $Z_\alpha$ and $Z_\beta$ are aligned in the sense of equation \eqref{eq:cka}. Then, $I(Z_\alpha;N_\alpha)=I(Z_\beta;N_\beta)=0$.
\end{theorem}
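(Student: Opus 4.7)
The plan is to combine two ingredients: (i) that the nuisance $N_\alpha$ is independent of the opposite-modality input $X_\beta$, and (ii) that perfect alignment forces $Z_\alpha$ to be recoverable from $Z_\beta$, hence from $X_\beta$. A DPI step then squeezes $I(Z_\alpha;N_\alpha)$ between these two and forces it to vanish. The same argument with modalities swapped handles $I(Z_\beta;N_\beta)$.

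\textbf{Step 1: $N_\alpha \perp X_\beta$.} First I would argue that $N_\alpha$ is a deterministic function of $X_\alpha$: equation \eqref{eq:nuisances_2} gives $I(X_\alpha;N_\alpha)=H(N_\alpha)$, so $H(N_\alpha\mid X_\alpha)=0$ and there exists $h$ with $N_\alpha=h(X_\alpha)$. Combine this with the essence Markov chain \eqref{eq:essence_1}, $X_\alpha\leftrightarrow Y\leftrightarrow X_\beta$: marginalizing over the preimages of $h$ yields $p(n_\alpha\mid y,x_\beta)=p(n_\alpha\mid y)$, i.e.\ $N_\alpha\leftrightarrow Y\leftrightarrow X_\beta$. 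Applying the DPI together with equation \eqref{eq:nuisances_1}, which states $I(Y;N_\alpha)=0$, gives $I(N_\alpha;X_\beta)\leq I(N_\alpha;Y)=0$.

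\textbf{Step 2: alignment yields $Z_\alpha\leftrightarrow X_\beta$.} This is the informal part of the theorem and the main obstacle: $CKA(K,L)=1$ is a statement about centered kernel matrices, and translating it into a probabilistic relation between the underlying random variables requires extra assumptions on the kernels (e.g.\ characteristic kernels would give $Z_\alpha$ and $Z_\beta$ are related by a measurable map up to null sets; with linear kernels one gets a linear-isometry relation). I would articulate perfect alignment as the existence of a measurable $g$ such that $Z_\alpha=g(Z_\beta)$ almost surely. Since $Z_\beta$ is itself a (stochastic) function of $X_\beta$ by Definition~\ref{def:representation}, composition gives the Markov chain $Z_\alpha\leftrightarrow Z_\beta\leftrightarrow X_\beta$, and in particular $Z_\alpha$ is a stochastic function of $X_\beta$.

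\textbf{Step 3: conclude by DPI.} Putting the two steps together, $Z_\alpha$ is a (stochastic) function of $X_\beta$ and $N_\alpha$ is independent of $X_\beta$, so the DPI yields
\begin{equation*}
0 \;\leq\; I(Z_\alpha;N_\alpha) \;\leq\; I(X_\beta;N_\alpha) \;=\; 0,
\end{equation*}
hence $I(Z_\alpha;N_\alpha)=0$. The symmetric roles of $\alpha$ and $\beta$ in Definition~\ref{def:essence} and in the alignment hypothesis let me repeat the argument verbatim with the labels swapped to obtain $I(Z_\beta;N_\beta)=0$. The only place where the proof is genuinely ``informal'' is Step~2: what kind of functional dependence CKA$=1$ actually forces depends on the choice of kernels and on how ties are broken between $Z_\alpha$ and $Z_\beta$, which is why the theorem is stated as informal rather than as a fully quantitative claim.
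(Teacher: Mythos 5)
Your argument is sound (up to the informality you flag in Step~2) but it is a genuinely different route from the paper's. The paper argues the contrapositive entirely at the level of the kernel matrices: if $I(Z_\alpha;N_\alpha)\neq 0$, the map from $Z_\alpha$ to $Z_\beta$ must collapse several distinct $z_\alpha$'s onto a single $z_\beta$, so one can exhibit a pair of samples $(l,l')$ with $z_\alpha^{(l)}\neq z_\alpha^{(l')}$ but $z_\beta^{(l)}=z_\beta^{(l')}$, producing entries where $K$ differs but $L$ does not and hence precluding $CKA=1$; the essence/nuisance Markov structure is never invoked. You instead prove the forward implication information-theoretically: you first derive $I(N_\alpha;X_\beta)=0$ from the nuisance definition (determinism of $N_\alpha$ given $X_\alpha$ plus $I(Y;N_\alpha)=0$ and the essence Markov chain), then read $CKA=1$ as a functional dependence $Z_\alpha=g(Z_\beta)$, and squeeze $I(Z_\alpha;N_\alpha)$ by DPI. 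Your version makes explicit \emph{which} structural assumptions do the work and isolates the informality in a single, clearly identified step (what $CKA=1$ forces, which indeed depends on the kernel --- for linear kernels it gives exactly the affine/orthogonal relation you describe); the paper's version stays closer to the actual alignment metric and explains mechanically how retained nuisances show up as kernel-matrix discrepancies, at the cost of an unproved first sentence (the existence of the surjective, non-injective $f$) that plays the role dual to your Step~2. Both proofs leave the same essential gap, just on opposite sides of the equivalence.
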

\begin{proof}
    First, if $I(Z_\alpha;N_\alpha)\neq 0$, then there exists a surjective function $f$ such that $Z_\beta=f(Z_\alpha)$, i.e., more than one representation of modality $\alpha$ can correspond with one representation of modality $\beta$.
    Let $\left\{z_\alpha^{(l)} \sim p_{\theta_\alpha}\left(z|x_\alpha^{(l)}\right): x_\alpha^{(l)} \sim p(x_\alpha) \right\}$ and $\left\{z_\beta^{(l)} \sim p_{\theta_\beta}\left(z|x_\beta^{(l)}\right): x_\beta^{(l)} \sim p(x_\beta) \right\}$ be two infinite sets.
    Then, there exists a pair $(l, l')$ for which $z_\alpha^{(l)} \neq z_\alpha^{(l')}$ and $z_\beta^{(l)} = z_\beta^{(l')}$ and  for which $K(z_\alpha^{(l)}, z_\alpha^{(l)}) \neq K(z_\alpha^{(l)}, z_\alpha^{(l')})$, but $K(z_\beta^{(l)}, z_\beta^{(l)}) = K(z_\beta^{(l)}, z_\beta^{(l')})$.
\end{proof}
We must note that, in practice, we usually work with finite sets, so we could obtain the maximum value of alignment while having the presence of nuisances in the representation.

\subsection{Proof of $I(Z_\alpha;Y)=I(Z_\alpha;X_\beta)$} \label{app:opt_suf_fin}
\begin{proof}
    \begin{align}
        p(z_\alpha|y,x_\beta)    & = \int p(z_\alpha|y,x_\beta,x_\alpha)p(x_\alpha|y,x_\beta) \,dx_\alpha \label{eq:dem41_line1}\\
                        & = \int p(z_\alpha|x_\alpha)p(x_\alpha|y) \,dx_\alpha = p(z_\alpha|y)
    \end{align}
    In line \eqref{eq:dem41_line1}, we apply Definition \ref{def:representation} and equation \eqref{eq:essence_1}.
    Then, we have the following Markov Chain $X_\beta \leftrightarrow Y \leftrightarrow Z_\alpha$ and, by the DPI, $I(Z_\alpha;Y) \geq I(Z_\alpha;X_\beta)$.
    \begin{align}
        p(z_\alpha|y,x_\beta)    & = \int p(z_\alpha|y,x_\beta,x_\alpha)p(x_\alpha|y,x_\beta) \,dx_\alpha \label{eq:dem41_line2}\\
                        & = \int p(z_\alpha|x_\alpha)p(x_\alpha|x_\beta) \,dx_\alpha = p(z_\alpha|x_\beta)
    \end{align}
    In line \eqref{eq:dem41_line2}, we apply Definition \ref{def:representation} and equation \eqref{eq:essence_2}.
    Then, we have the following Markov Chain $Y \leftrightarrow X_\beta \leftrightarrow Z_\alpha$ and, by the DPI, $I(Z_\alpha;Y) \leq I(Z_\alpha;X_\beta)$.
\end{proof}

\subsection{Proof of equation \eqref{eq:opt_min_ub}} \label{app:opt_min_ub}
\begin{proof}
    \begin{align}
        I(Z_\alpha; X_\alpha) & = \iint p_{\theta_\alpha}(z,x_\alpha) \log \frac{p_{\theta_\alpha}(z|x_\alpha)}{p_{\theta_\alpha}(z)} \,dz \,dx_\alpha \\
                        & = \iiint p_{\theta_\alpha}(z|x_\alpha)p(x_\alpha,x_\beta) \log \frac{p_{\theta_\alpha}(z|x_\alpha)}{p_{\theta_\alpha}(z)} \,dz \,dx_\alpha \,dx_\beta\\
                        & = \iiint p_{\theta_\alpha}(z|x_\alpha)p(x_\alpha,x_\beta) \log \frac{p_{\theta_\alpha}(z|x_\alpha)}{p_{\theta_\beta}(z|x_\beta)} \frac{p_{\theta_\beta}(z|x_\beta)}{p_{\theta_\alpha}(z)} \,dz \,dx_\alpha \,dx_\alpha\\
                        & = \underset{p(x_\alpha,x_\beta)}{\mathbb{E}} \left[ D_{KL} \left( p_{\theta_\alpha}(z|x_\alpha) || p_{\theta_\beta}(z|x_\beta) \right) \right] - \underset{p(x_\alpha,x_\beta|z)}{\mathbb{E}} \left[ D_{KL} \left( p_{\theta_\alpha}(z) || p_{\theta_\beta}(z|x_\beta) \right) \right]\\
                        & \leq \underset{p(x_\alpha,x_\beta)}{\mathbb{E}} \left[ D_{KL} \left( p_{\theta_\alpha}(z|x_\alpha) || p_{\theta_\beta}(z|x_\beta) \right) \right]
    \end{align}
\end{proof}

\subsection{Proof of equation \eqref{eq:opt_min_ub_gaussian}} 
\label{app:opt_min_ub_gaussian}
\begin{proof}
    Let $p_{\theta_\alpha}(z \mid x_\alpha) = \mathcal{N}(z; \mu_\alpha, \sigma^2 I)$ and $p_{\theta_\beta}(z \mid x_\beta) = \mathcal{N}(z; \mu_\beta, \sigma^2 I)$, with $\mu_\alpha = \mu_{\theta_\alpha}(x_\alpha)$ and $\mu_\beta = \mu_{\theta_\beta}(x_\beta)$. The KL divergence between these two Gaussians is given by:
    \begin{equation}
    \label{eq:kl-gauss-general}
    D_{\mathrm{KL}}(p \,\|\, q) = \frac{1}{2} \left[
        \operatorname{tr}(\Sigma_q^{-1} \Sigma_p) +
        (\mu_q - \mu_p)^\top \Sigma_q^{-1} (\mu_q - \mu_p) -
        d + \log \frac{\det \Sigma_q}{\det \Sigma_p}
    \right]
    \end{equation}
    where $p = \mathcal{N}(\mu_p, \Sigma_p)$ and $q = \mathcal{N}(\mu_q, \Sigma_q)$. For $\Sigma_p = \Sigma_q = \sigma^2 I$, Eq.~\eqref{eq:kl-gauss-general} simplifies to:
    \begin{align}
    D_{\mathrm{KL}} \left( p_{\theta_\alpha}(z \mid x_\alpha) \,\|\, p_{\theta_\beta}(z \mid x_\beta) \right)
    &= \frac{1}{2\sigma^2} \left\| \mu_{\theta_\alpha}(x_\alpha) - \mu_{\theta_\beta}(x_\beta) \right\|_2^2.
    \label{eq:kl-simplified}
    \end{align}
    
    Taking expectation over the joint distribution $p(x_\alpha, x_\beta)$, we obtain:
    \begin{align}
    \mathbb{E}_{p(x_\alpha, x_\beta)} \left[ D_{\mathrm{KL}} \left( p_{\theta_\alpha}(z \mid x_\alpha) \,\|\, p_{\theta_\beta}(z \mid x_\beta) \right) \right]
    &= \frac{1}{2\sigma^2} \mathbb{E}_{p(x_\alpha, x_\beta)} \left[
        \left\| \mu_{\theta_\alpha}(x_\alpha) - \mu_{\theta_\beta}(x_\beta) \right\|_2^2
    \right]
    \label{eq:expected-kl}
    \end{align}
    
    Hence, the expected KL divergence is proportional to the expected squared $\ell_2$ distance between the mean embeddings:
    \begin{equation}
    \mathbb{E}_{p(x_\alpha, x_\beta)} \left[ D_{\mathrm{KL}} \left( p_{\theta_\alpha}(z \mid x_\alpha) \,\|\, p_{\theta_\beta}(z \mid x_\beta) \right) \right]
    \propto \mathbb{E}_{p(x_\alpha, x_\beta)} \left[
        \left\| \mu_{\theta_\alpha}(x_\alpha) - \mu_{\theta_\beta}(x_\beta) \right\|_2^2
    \right].
    \end{equation}
    The constant of proportionality is $\frac{1}{2\sigma^2}$ and independent of the model parameters $\theta_\alpha, \theta_\beta$.
\end{proof}

\newpage
\section{Connection between our loss function and temperature}
In the case where the embeddings are unit-norm, our proposed loss takes the form:
\begin{equation}
\label{eq:loss-main}
\mathcal{L}_i = \log \frac{\exp(s_{ii} / \tau)}{\sum_k \exp(s_{ik} / \tau)} + 2\beta(1 - s_{ii}),
\end{equation}
where $s_{ik}$ denotes the cosine similarity between the embeddings $z^{(i)}$ and $z^{(k)}$. The gradient of $\mathcal{L}_i$ with respect to each similarity term is given by:
\begin{align}
\label{eq:grad-sii}
\frac{\partial \mathcal{L}_i}{\partial s_{ii}} &= -\frac{1}{\tau} \left(1 - \frac{\exp(s_{ii}/\tau)}{\sum_k \exp(s_{ik}/\tau)} \right) - 2\beta, \\
\label{eq:grad-sij}
\frac{\partial \mathcal{L}_i}{\partial s_{ij}} &= \frac{1}{\tau} \cdot \frac{\exp(s_{ij} / \tau)}{\sum_k \exp(s_{ik} / \tau)} \quad \text{for } j \neq i.
\end{align}

We further analyze a modified variant of the InfoNCE loss where the temperature differs between the numerator and denominator:
\begin{equation}
\label{eq:loss-modified}
\mathcal{L}'_i = \log \frac{\exp(s_{ii} / \tau')}{\sum_k \exp(s_{ik} / \tau)}.
\end{equation}
The gradients of this variant are:
\begin{align}
\label{eq:gradp-sii}
\frac{\partial \mathcal{L}'_i}{\partial s_{ii}} &= -\frac{1}{\tau'} \left(1 - \frac{\exp(s_{ii} / \tau')}{\sum_k \exp(s_{ik} / \tau)} \right), \\
\label{eq:gradp-sij}
\frac{\partial \mathcal{L}'_i}{\partial s_{ij}} &= \frac{1}{\tau} \cdot \frac{\exp(s_{ij} / \tau)}{\sum_k \exp(s_{ik} / \tau)} \quad \text{for } j \neq i,
\end{align}
which matches Eq.~\eqref{eq:grad-sij}, confirming that the two losses only differ in their treatment of $s_{ii}$.

By comparing Eq.~\eqref{eq:grad-sii} and Eq.~\eqref{eq:gradp-sii}, we find that our regularized loss is equivalent to optimizing a variant of InfoNCE with a temperature mismatch between numerator and denominator. Solving for $\beta$, we obtain:
\begin{equation}
\label{eq:beta}
\beta = \frac{1}{2} \left[ \frac{\tau - \tau'}{\tau \tau'} + \frac{\exp(s_{ii}/\tau) - \exp(s_{ii}/\tau')}{\sum_k \exp(s_{ik}/\tau)} \right].
\end{equation}

This expression reveals that:
\begin{itemize}
    \item The effective temperature gap depends on the similarity between the anchor and all other samples in the batch.
    \item When $s_{ii} \ll \sum_k \exp(s_{ik}/\tau)$, i.e., predictions are far from the target distribution, the second term in Eq.~\eqref{eq:beta} is negligible, and 
    \[
    \beta \approx \frac{1}{2} \cdot \Delta\tau
    \]
    with $\Delta\tau = \frac{\tau - \tau'}{\tau \tau'}$. Thus, larger values of $\beta$ correspond to larger temperature mismatches.
    \item Conversely, when $s_{ii} \approx \sum_k \exp(s_{ik}/\tau)$, i.e., the model is confident in its match, we have:
    \[
    \beta \approx \frac{1}{2} \left[ \Delta\tau + 1 - \exp(\Delta\tau) \right],
    \]
    indicating that the temperature gap required to match a fixed $\beta$ is smaller in this regime.
\end{itemize}

Hence, our regularizer can be interpreted as an adaptive temperature adjustment that decreases the denominator temperature when the model is uncertain, and aligns it closer to the numerator temperature when predictions are confident.

\newpage
\section{Experimental Details} \label{app:experimental}
Below, we provide a summary of the experimental setup. Full implementation details and code are available at: \url{https://github.com/antonioalmudevar/multimodal_ib}.
\begin{table}[ht]
\centering
\caption{Hyperparameters of Section \ref{sec:toy_exp}}
\vspace{-8pt}
\begin{tabular}{lccc}
\hline
                    & DSprites                   & MPI3D                            & Shapes3D                \\ \hline
factors encoder MLP & $\{16, 16, 8, 8, 16, 16\}$ & $\{128, 128, 64, 64, 128, 128\}$ & $\{64,64,32,32,64,64\}$ \\
number of epochs    & 50                         & 50                               & 50                      \\
batch size          & 128                        & 128                              & 128                     \\
optimizer           & Adam                       & Adam                             & Adam                    \\
learning rate       & 0.001                      & 0.001                            & 0.001                   \\
scheduler           & Step                       & Step                             & Step                    \\
step size (epochs)  & 20                         & 20                               & 20                      \\
scheduler $\gamma$  & 0.3                        & 0.3                              & 0.3                     \\ \hline
\end{tabular}
\end{table}

\begin{table}[ht]
\centering
\caption{Categories of factors in section \ref{subsec:remove_nuisances}. In MPI3D, the factor background\_color actually refers to the color of a ring in the images, so we consider it as an object (see \url{https://github.com/rr-learning/disentanglement_dataset}). There are some factors missing because the do not fall into any category.}
\vspace{-8pt}
\begin{tabular}{lccc}
\hline
             & DSprites   & MPI3D                            & Shapes3D    \\ \hline
Location     & posX, posY & horizontal\_axis, vertical\_axis & orientation \\
Shape        & shape      & object\_shape                    & shape       \\
Size         & size       & object\_size                     & scale       \\
Object Color & -          & object\_color, background\_color & object\_hue \\ \hline
\end{tabular}
\end{table}

\begin{table}[ht]
\centering
\caption{Hyperparameters of Section \ref{sec:real_app}}
\vspace{-8pt}
\begin{tabular}{lc}
\hline
vision encoder            & VIT-g/14 \cite{fang2023eva}         \\
image size                & 224                                                  \\
\# of query tokens        & 32                                                   \\
cross attention frequency & 2                                                    \\
representation dimension  & 256                                                  \\
text encoder              & BERT$_{base}$\cite{devlin2018bert} \\
batch size                & 128                                                  \\
optimizer                 & Adam                                                 \\
learning rate             & 0.0001                                               \\
optimizer $\beta$         & $(0.9, 0.999)$                                       \\
scheduler                 & cosine annealing                                     \\
warm-up steps             & 1000                                                 \\
training steps            & 50000                                                \\ \hline
\end{tabular}
\end{table}

\newpage

\section{More Results of Section \ref{subsec:image_captioning}} \label{app:image_captioning}
\begin{figure}[ht]
    \centering
    \includegraphics[width=0.9\linewidth]{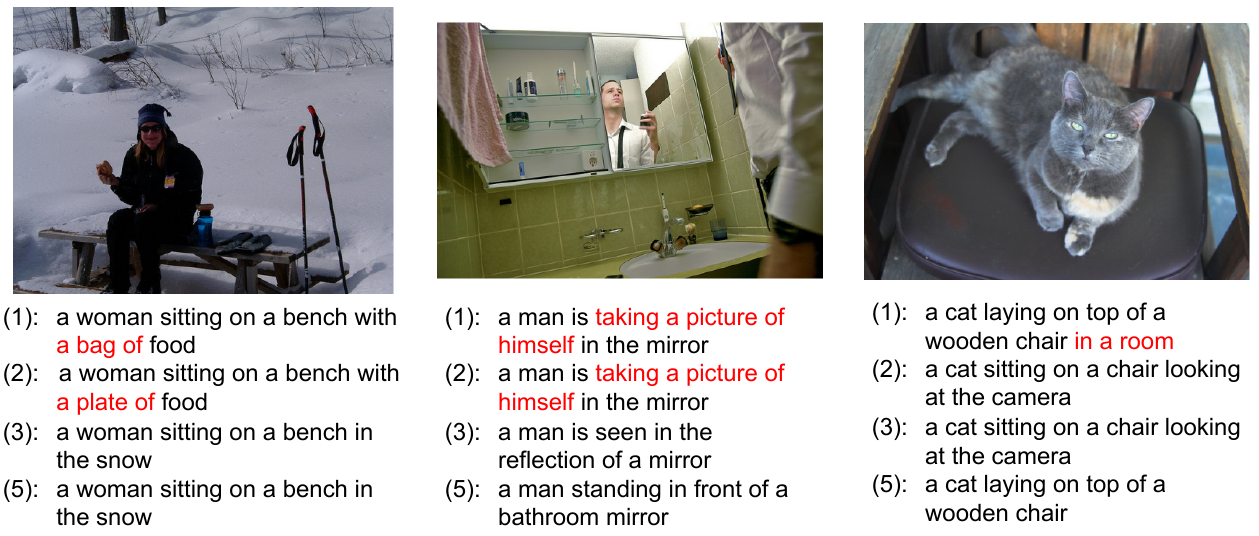}
\end{figure}
\begin{figure}[ht]
    \centering
    \includegraphics[width=0.9\linewidth]{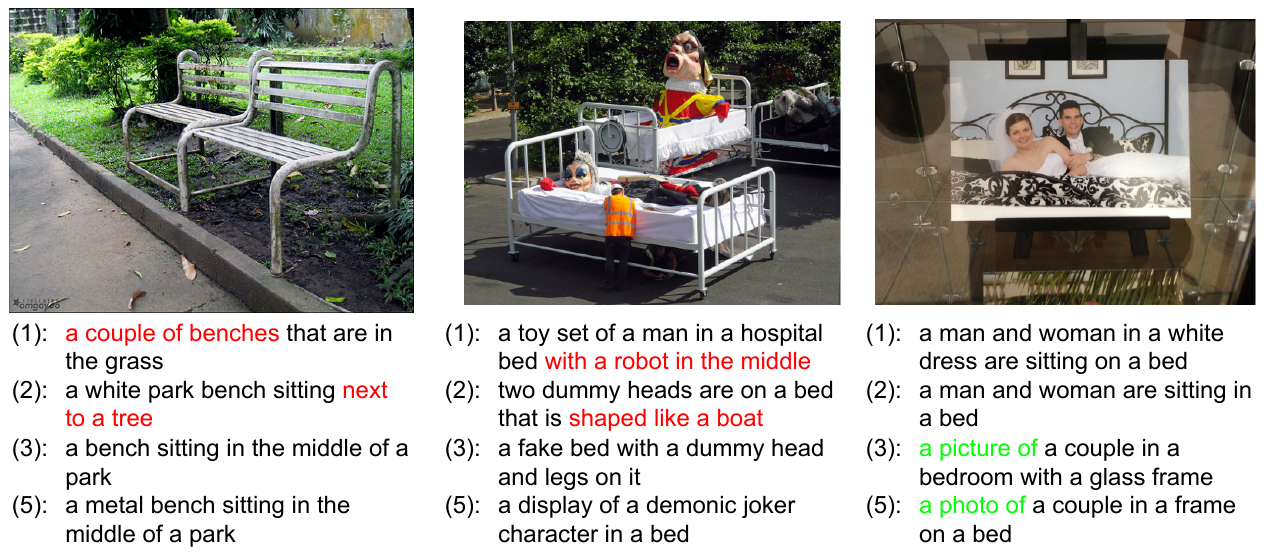}
\end{figure}

\begin{figure}[t]
    \centering
    \includegraphics[width=0.9\linewidth]{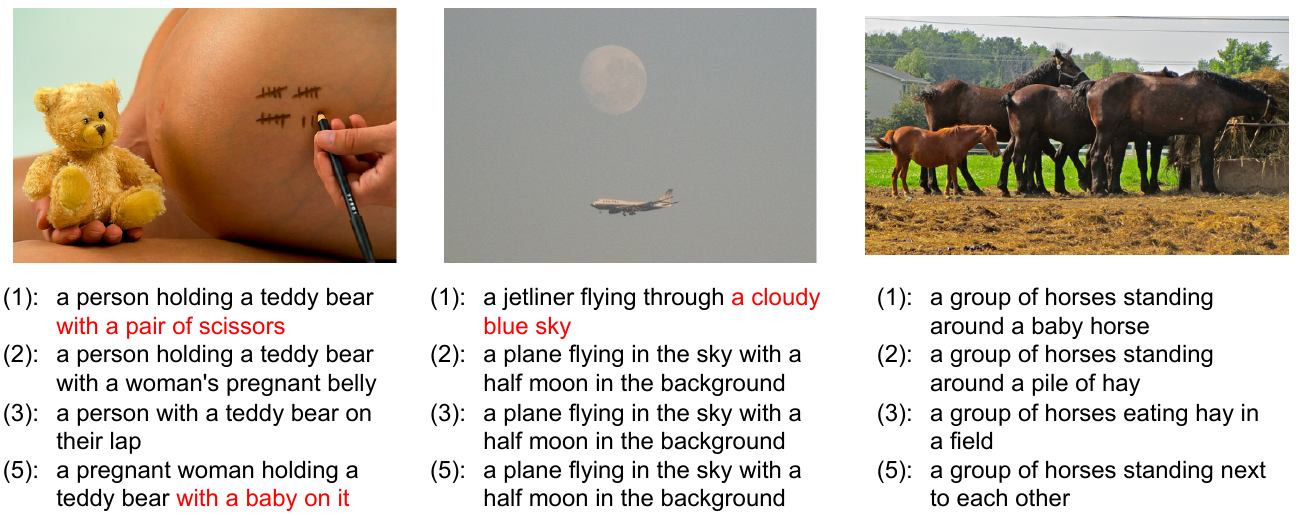}
\end{figure}

\begin{figure}[t]
    \centering
    \includegraphics[width=0.9\linewidth]{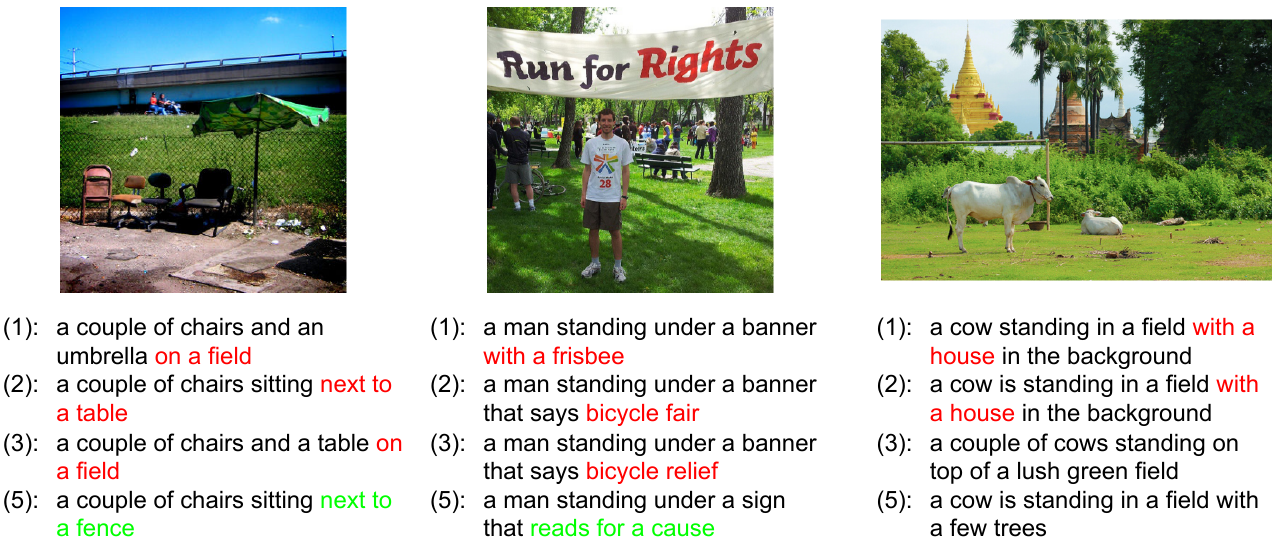}
\end{figure}
\begin{figure}[t]
    \centering
    \includegraphics[width=0.9\linewidth]{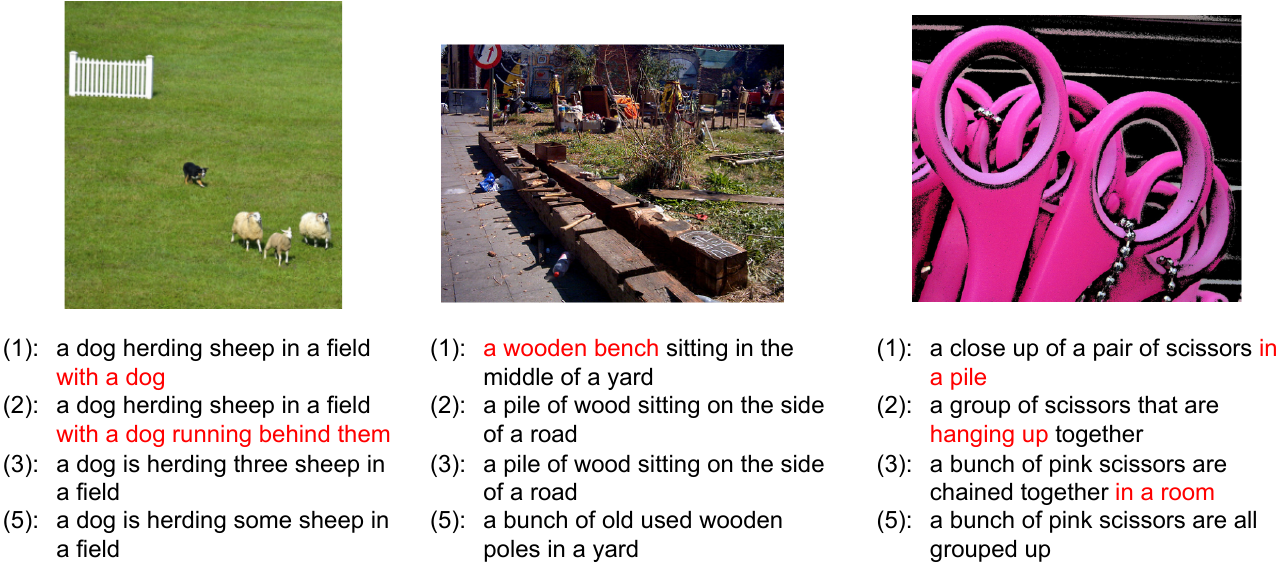}
    \caption{Captions generated by some of the trained models. Numbers correspondence is the same as in Table \ref{tab:image_captioning}.}
\end{figure}

\clearpage
\section{More Results of Section \ref{subsec:space_arithmetic}} \label{app:space_arithmetic}
\begin{figure}[ht]
    \centering
    \includegraphics[width=\linewidth]{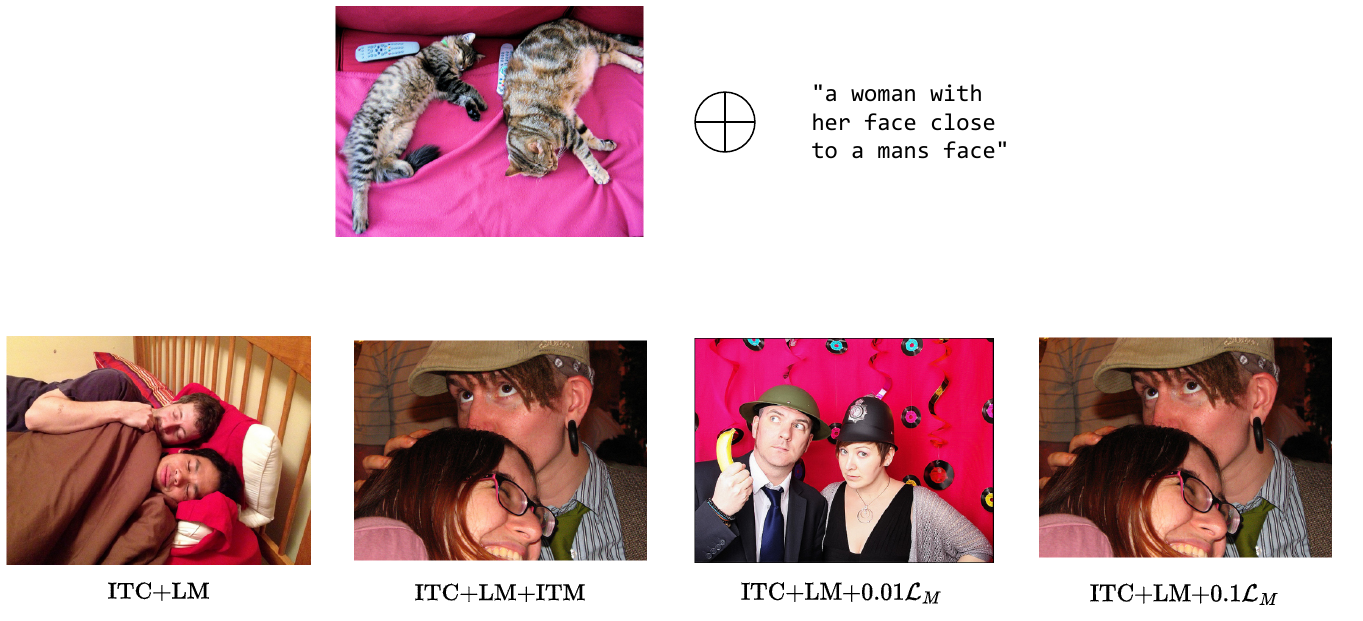}
\end{figure}
\vspace{50pt}
\begin{figure}[ht]
    \centering
    \includegraphics[width=\linewidth]{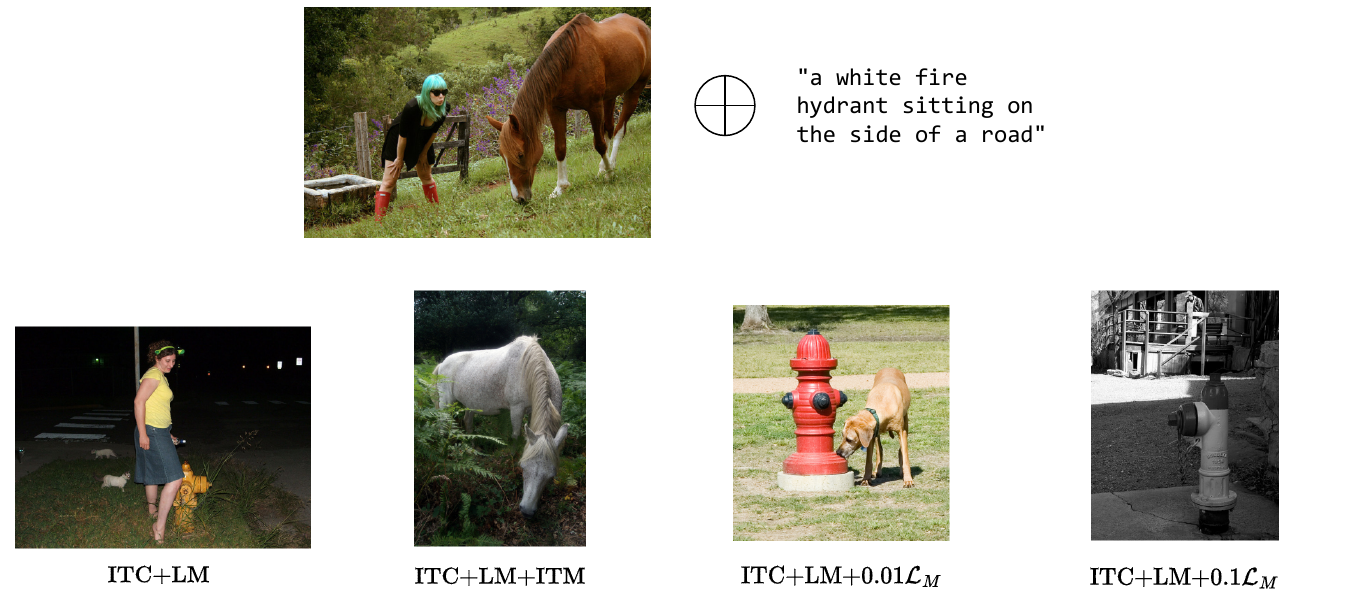}
\end{figure}
\vspace{50pt}
\begin{figure}[ht]
    \centering
    \includegraphics[width=\linewidth]{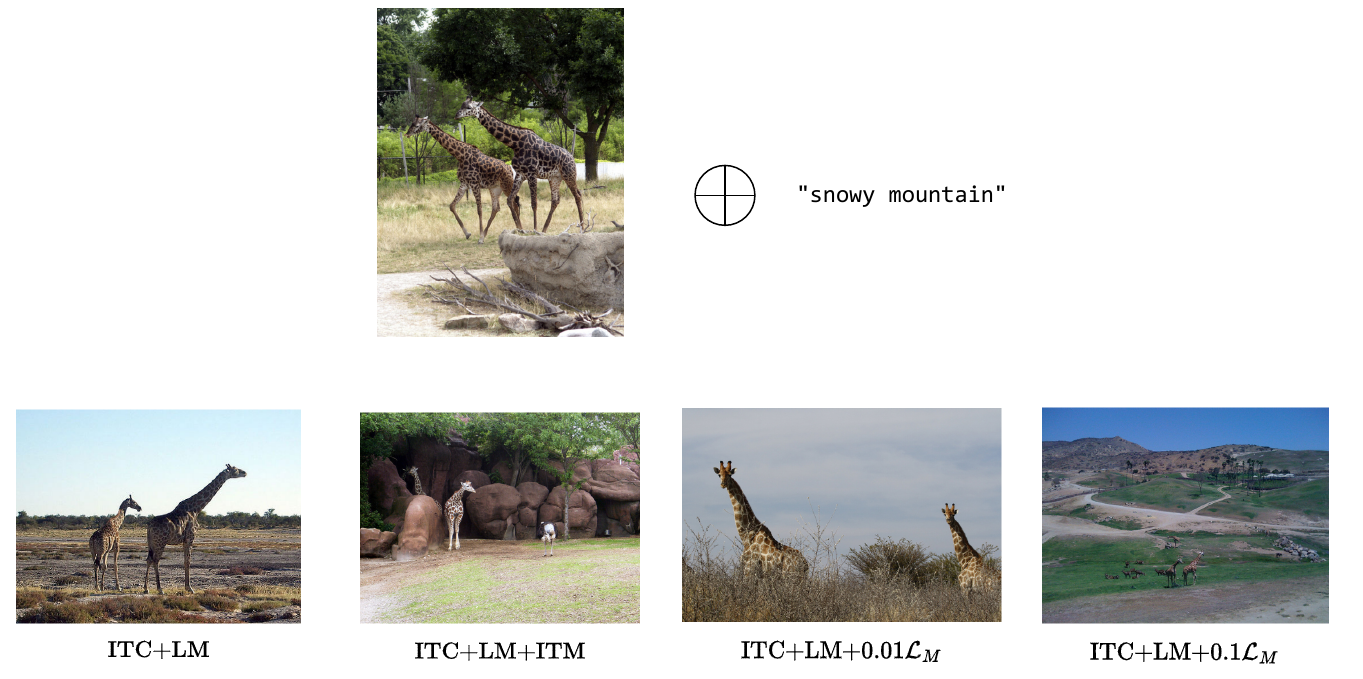}
\end{figure}
\vspace{50pt}
\begin{figure}[ht]
    \centering
    \includegraphics[width=\linewidth]{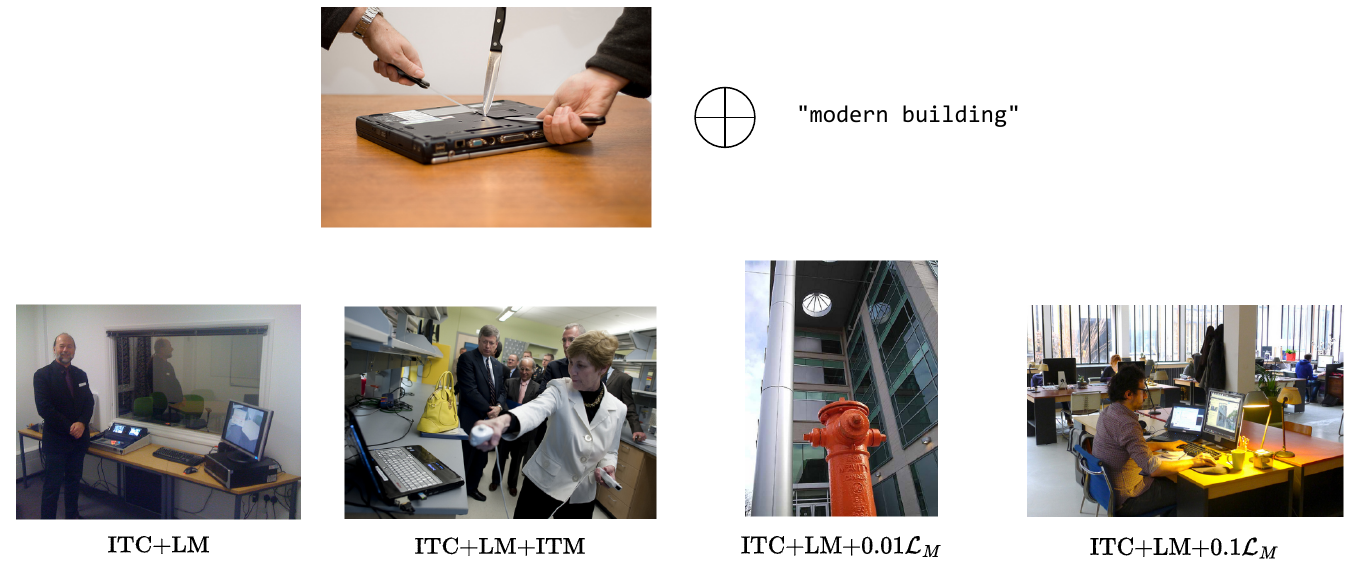}
\end{figure}
\clearpage
\begin{figure}[ht]
    \centering
    \includegraphics[width=\linewidth]{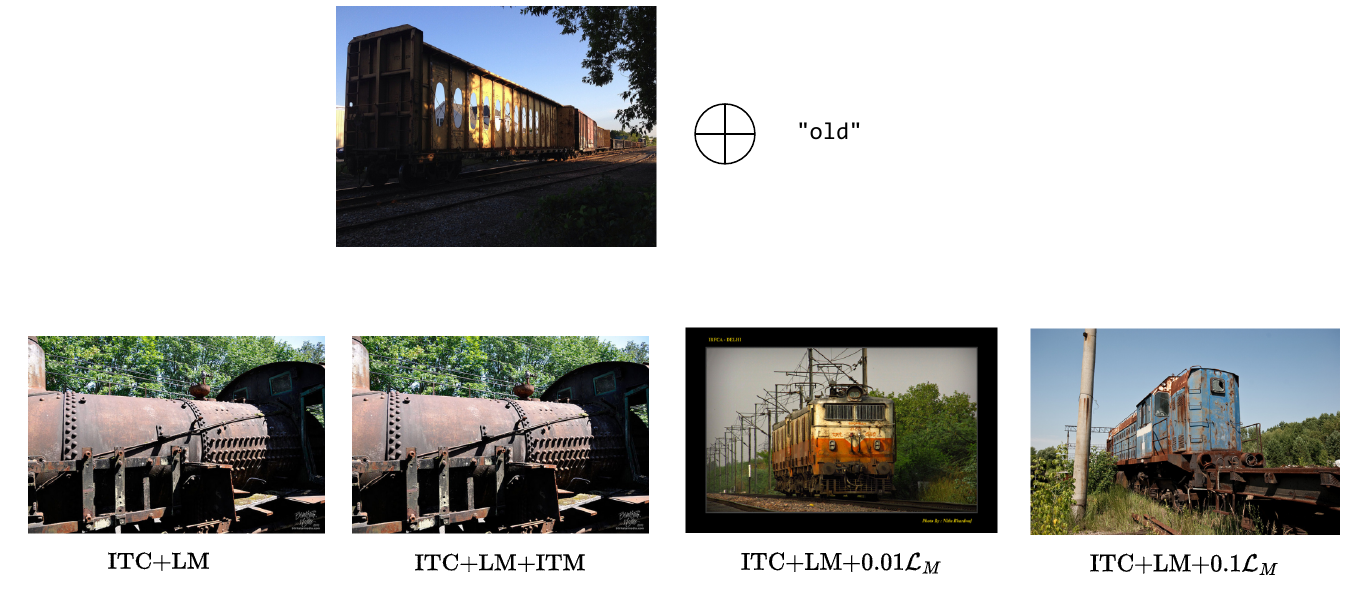}
\end{figure}

%%%%%%%%%%%%%%%%%%%%%%%%%%%%%%%%%%%%%%%%%%%%%%%%%%%%%%%%%%%%%%%%%%%%%%%%%%%%%%%
%%%%%%%%%%%%%%%%%%%%%%%%%%%%%%%%%%%%%%%%%%%%%%%%%%%%%%%%%%%%%%%%%%%%%%%%%%%%%%%

\end{document}